\newcommand{\bfx}{\boldsymbol{x}}
\def\tr{{\raise0pt\hbox{$\scriptscriptstyle\top$}}}
\newtheorem{example}{Example}
\newtheorem{proposition}{Proposition}
\newtheorem{theorem}{Theorem}
\newtheorem{remark}{Remark}
\newtheorem{corollary}{Corollary}
\newtheorem{definition}{Definition}
\def\eop{\hfill{$\Box$}\medskip}
\def\BibTeX{{\rm B\kern-.05em{\sc i\kern-.025em b}\kern-.08em
    T\kern-.1667em\lower.7ex\hbox{E}\kern-.125emX}}
\definecolor{redcolor}{rgb}{0.7,0.3,0.3}
\definecolor{bluecolor}{rgb}{0.1,0.1,0.8}
\begin{document}

\title{Augmented Artificial Intelligence:\\ a Conceptual Framework
\thanks{ANG and IYT were Supported by Innovate UK (KTP009890 and KTP010522) and Ministry of science and education, Russia (14.Y26.31.0022). BG  thanks the University of Leicester for granting him academic study leave to do this research. }
}

\author{\IEEEauthorblockN{1\textsuperscript{st} Alexander N Gorban}
\IEEEauthorblockA{\textit{University of Leicester} \\
\textit{and Lobachevsky University}\\
Leicester, UK, and Nizhni Novgorod, Russia \\
a.n.gorban@le.ac.uk}
\and
\IEEEauthorblockN{2\textsuperscript{nd} Bogdan Grechuk}
\IEEEauthorblockA{\textit{University of Leicester} \\
Leicester, UK\\
bg83@le.ac.uk}
\and
\IEEEauthorblockN{3\textsuperscript{rd} Ivan Y Tyukin}
\IEEEauthorblockA{\textit{University of Leicester} \\
\textit{and Lobachevsky University}\\
Leicester, UK, and Nizhni Novgorod, Russia \\
i.tyukin@le.ac.uk}
}

\maketitle

\begin{abstract}
All artificial Intelligence (AI) systems make errors. These errors are unexpected,
and differ often from the typical human mistakes (``non-human'' errors). The AI errors
should be corrected  without damage of existing skills and, hopefully, avoiding direct human expertise.  This  {paper} presents an initial summary report of project taking new and
systematic approach to improving the intellectual effectiveness of the
individual AI by communities of AIs. We combine some  ideas of learning in heterogeneous
 multiagent systems with new and original mathematical approaches for non-iterative corrections
 of errors of legacy AI systems.  The  mathematical foundations of AI non-destructive correction are presented and a series of new stochastic separation theorems is proven. These theorems  provide a new instrument for the development, analysis, and assessment of machine learning methods and algorithms in high dimension.  They demonstrate that in high dimensions and even for exponentially large samples, linear classifiers in their classical Fisher's form are powerful enough to separate errors from correct responses with high probability and to provide efficient solution to the non-destructive corrector problem. In particular, we prove some hypotheses formulated in our paper `Stochastic Separation Theorems' (Neural Networks, 94, 255--259,  2017), and answer one general problem published by Donoho and Tanner in 2009. 
\end{abstract}

\begin{IEEEkeywords}
multiscale experts, knowledge transfer, non-iterative learning, error correction, measure concentration, blessing of dimensionality
\end{IEEEkeywords}

\section{Introduction}
The history of neural networks research can be represented as a series of explosions or waves of inventions and expectations. This history ensures us that the popular Gartner's hype cycle for emerging technologies presented by the solid curve on Fig.~\ref{Fig:SlideGartnerInnovationCycle} (see, for example \cite{ColumbusGartner}) should be supplemented by the new peak of expectation explosion (dashed line). Some expectations from the previous peak are realized and move to the ``Plateau of Productivity'' but the majority of them jump to the next ``Peak of Inflated Expectations''. This observation relates not only to neural technologies but perhaps to majority of IT innovations. It is surprising to see, how expectations reappear in the new wave from the previous peak often without modifications, just changing the human carriers.
\begin{figure}[ht]
\centering
\includegraphics[width=0.95\columnwidth]{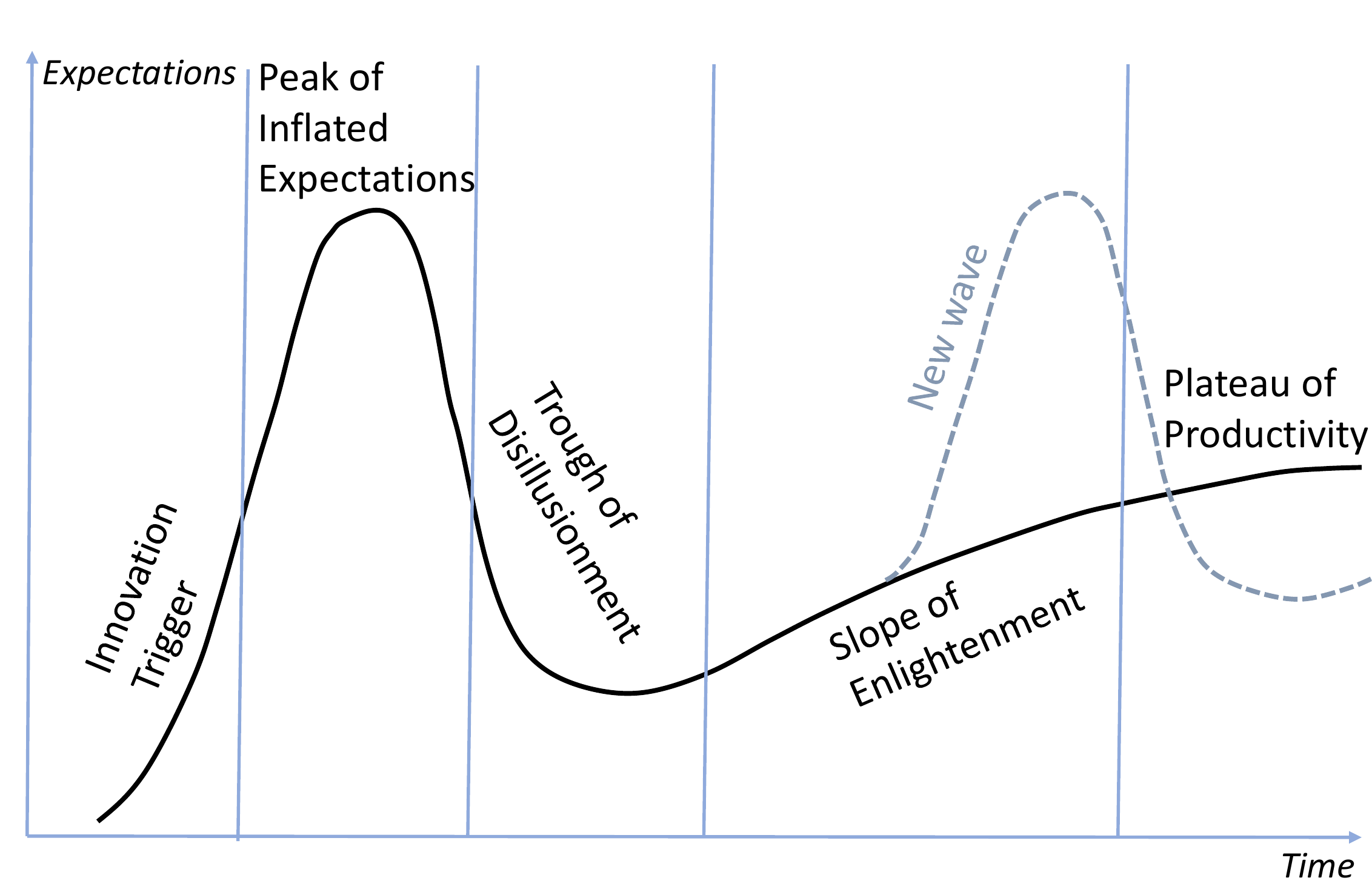}
\caption {Gartner's Hype Cycle for emerging technologies supplemented by a new peak.}
\label{Fig:SlideGartnerInnovationCycle}
\end{figure}

Computers and networks have been expected to augment the human intelligence \cite{Engelbart1962}. In 1998 one of the authors had been inspired by 8 years of success of knowledge discovery by deep learning neural network and by the transformation of their hidden knowledge into explicit knowledge in the form of ``logically transparent networks'' \cite{GorbanMirTsar1999} by means of pruning, binarization and other simplification procedures \cite{Gorban1990,LeCunBrain1990}, and wrote: ``I am sure that the neural network technology of knowledge discovery is a "point of growth", which will remodel  neuroinformatics, transform many areas of information technologies and create new approaches'' \cite{Gorban1998}. Now it seems that this prediction will not be fulfilled: most customers do not care about gaining knowledge but  prefer  the ``one button solutions'', which exclude humans from the process as far as it is possible. This is not a new situation in history. New intellectual technologies increase intellectual abilities of mankind, but not the knowledge of individual humans. Here, we can refer to Plato ``There is an old Egyptian tale of Theuth, the inventor of  writing, showing his invention to the god Thamus, who told him that he would only spoil men's memories and take away their understandings'' \cite{Plato}. The adequate model of future Artificial Intelligence (AI) usage should include large communities of AI systems. Knowledge should circulate and grow in these communities. Participation of humans in these processes should be minimized. In the course of this technical revolution  not the ``Augmented human intellect''   but the continuously augmenting AI will be created.

In this work, we propose the conceptual framework for augmenting AI in communities or ``social networks'' of AIs. For construction of such social networks, we employ several ideas in addition to the classical machine learning. The first of them is separation of the problem areas between small local (neural)  experts, their competitive and collaborative learning, and  conflict resolution. In 1991, two first papers with this idea were published simultaneously \cite{GilGorbanMir1991,JacobsHinton1991}. The techniques for distribution of tasks between small local experts were developed. In our version of this technology \cite{GilGorbanMir1991}  and in all our applied software \cite{GorbanMirTsar1999,Multineuron1994,Multineuron1995n1,Multineuron1995n2} the neural network answers were always complemented by the evaluation of the network self-confidence. This self-confidence level  is an important instrument for community learning.

The second idea is the blessing of dimensionality \cite{Kainen1997,Donoho2000,AndersonEtAl2014,GorTyuRom2016,GorTyukPhil2018} and  the AI correction method \cite{GorbanRomBurtTyu2016} based on stochastic  separation theorems  \cite{GorbTyu2017}. The ``sparsity'' of high-dimensional spaces and  concentration of measure phenomena make some low-dimensional approaches impossible in high dimensions. This problem is widely known as the ``curse of dimensionality''. Surprisingly, the same phenomena can be efficiently employed for creation of new, high-dimensional methods, which seem to be much simpler than in low dimensions. This is the ``blessing of dimensionality''.

The classical theorems about concentration of measure  state that random points in a highly-dimensional data distribution are concentrated in a thin layer near an average or median level set of a Lipschitz function (for introduction into this area we refer to \cite{Ledoux2005}). The newly discovered stochastic separation theorems \cite 2017 revealed the fine structure of these thin layers: the random  points  are all linearly separable from the rest of the set even for exponentially large random sets. Of course, the probability distribution should be `genuinely' high-dimensional for all these concentration and separation theorems.

Linear separability of exponentially large random subsets in high dimension allows us to solve the problem of nondestructive correction of legacy AI systems: the linear classifiers in their simplest Fisher’s form can separate errors from correct responses with high probability  \cite{GorbanRomBurtTyu2016}. It is possible to avoid the standard assumption about independence and identical distribution of data points (i.i.d.). The non-iterative and nondestructive  correctors  can be employed for skills transfer in communities of AI systems \cite{Tyukin2017a}.

These two ideas are joined in a special organisational environment of community learning
which is organized in several phases:
\begin{itemize}
\item Initial supervising learning where community of newborn experts assimilate the knowledge  hidden in  labeled tasks from a problem-book (the problem-book is a continuously growing and transforming collection of samples);
\item Non-iterative learning of community with self-labeling of real-life or additional training samples on the basis of separation of expertise between local experts, their continuous adaptation and mutual correction for the assimilation of gradual  changes in reality.
\item Interiorisaton of the results of the self-supervising learning of community in the internal skills of experts.
\item Development and learning of special network manager that evaluates the level of expertise of the local experts  for a problem and distributes the incoming  task flow  between them.
\item Using an ``ultimate auditor'' to assimilate   qualitative changes in the environment and correct   collective errors; it may be human inspection, a  feedback from real life, or another system of interference into the self-labeling process.
\end{itemize}

We describe the main constructions of this approach using the example of classification problems and simple linear correctors. The correctors with higher abilities can be constructed on the basis of small neural networks with uncorrelated neurons \cite{GorbanRomBurtTyu2016} but already single-neuron correctors (Fisher's discriminants) can help in explanation of a wealth of empirical evidence related to in-vivo recordings of ``Grandmother'' cells and ``concept'' cells \cite{GorTyukPhil2018,TyukinBrain2017}. We pay special attention to the mathematical backgrounds of the technology and prove a series of new stochastic separation theorems. In particular, we find that the hypothesis about stochastic separability for general log-concave distribution \cite{Tyukin2017a} is true and describe  a general class of probability measures with linear stochastic separability, the question was asked in 2009 by Donoho and Tanner \cite{DonohoTanner2009}.

\section{Supervising stage: Problem Owners, Margins, Self-confidence, and Error functions}

\begin{figure}
\centering
\includegraphics[width=0.95\columnwidth]{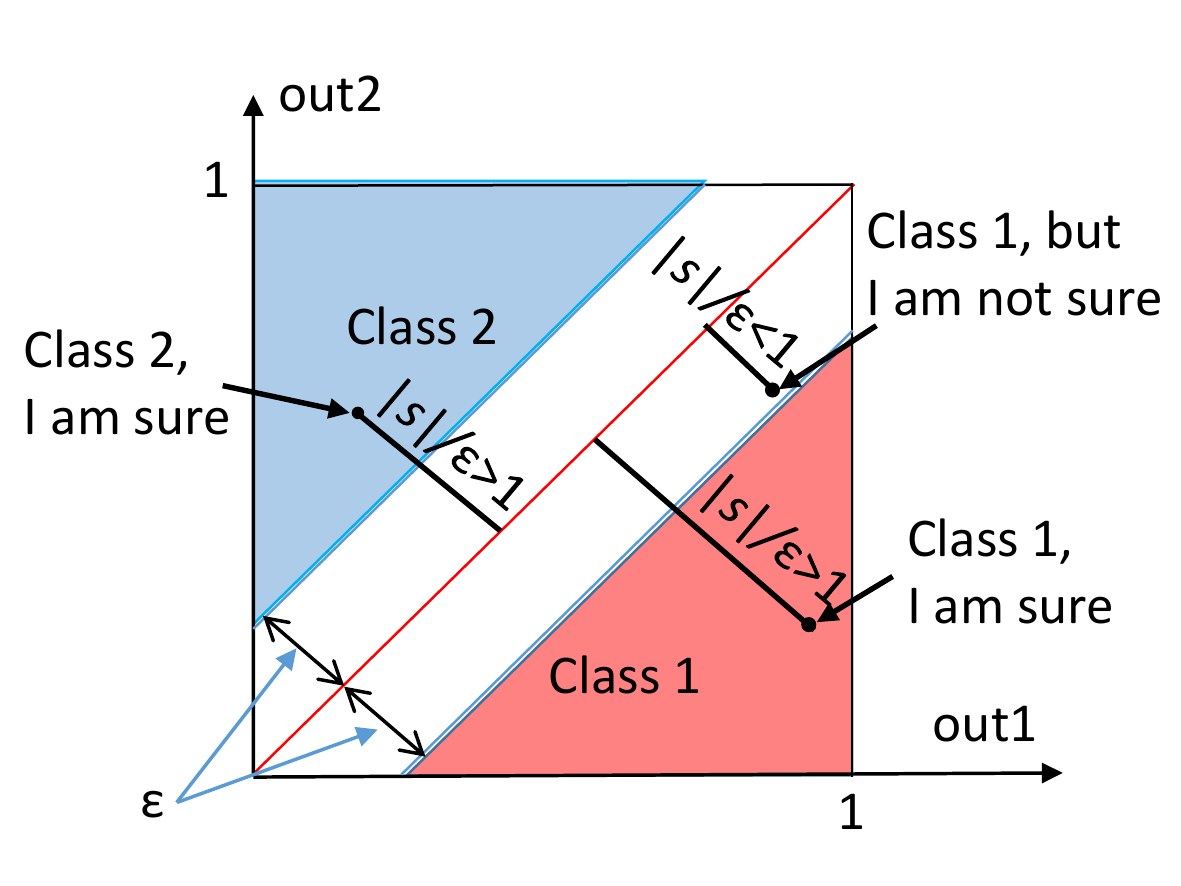}
\caption {Answers and assurance; $s$ is the deviation from the diagonal.}
\label{Fig:SoftMarginEstime}
\end{figure}

Consider binary classification problems. The neural experts with arbitrary internal structure have two outputs, out1 and out2, with interpretation: the sample belongs to class 1 if out1$\geq$out2 and it belongs to class 2 if out1$<$out2. For any given $\varepsilon>0$ we can define the level of (self-)confidence in the classification answer as it demonstrated in Fig.~\ref{Fig:SoftMarginEstime}. The {\em owner of a sample} is an expert that gives the best (correct and most confident) answer for this sample. If we assume the single owner for every sample then in the community functioning for problem solving this single owner gives the final result (Fig.~\ref{Fig:WinnerTA}).

\begin{figure}
\centering
\includegraphics[width=0.95\columnwidth]{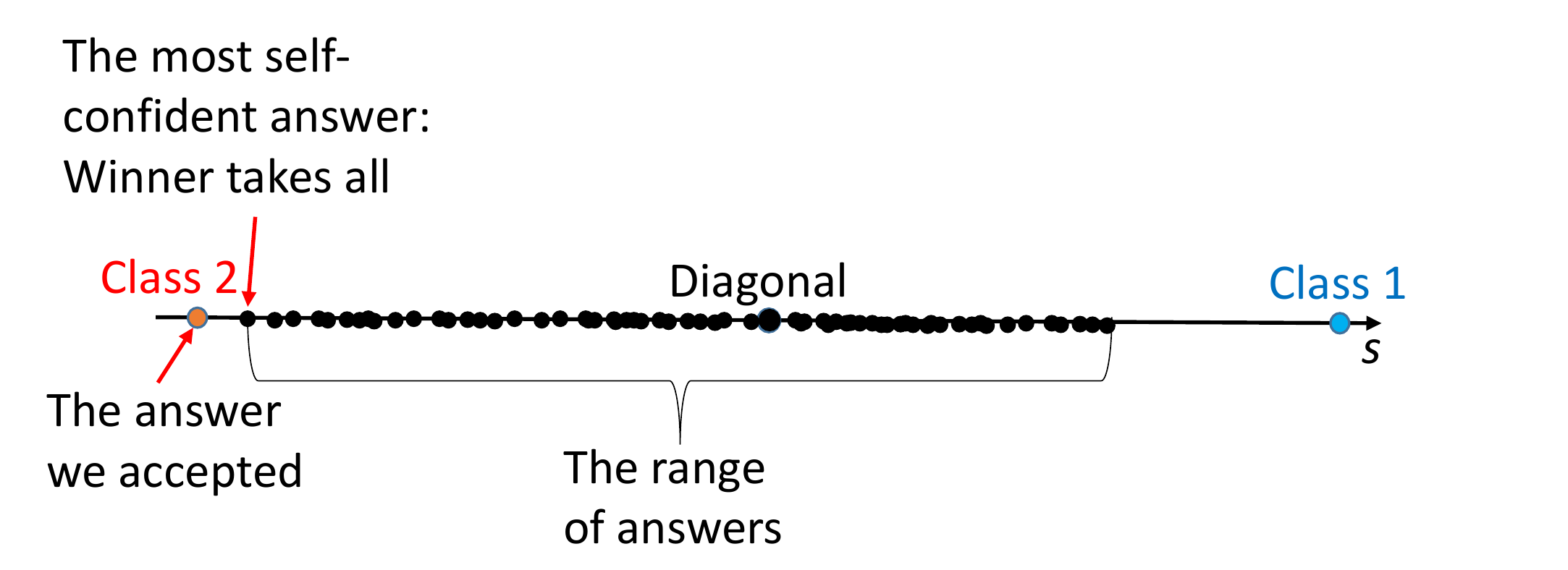}
\caption {Interpretation of community answer: Most self-confident winner takes all. Dots correspond to the various agents' answers, $s$ is defined in Fig.~\ref{Fig:SoftMarginEstime}.}
\label{Fig:WinnerTA}
\end{figure}
We aim to train the community of agents in such a way that they will give  correct self-confident answers to the samples they own, and do not make large mistakes on all other examples they never met before. The desired histogram of answers is presented in Fig.~\ref{Fig:WinnerTA}.
\begin{figure}
\centering
\includegraphics[width=0.95\columnwidth]{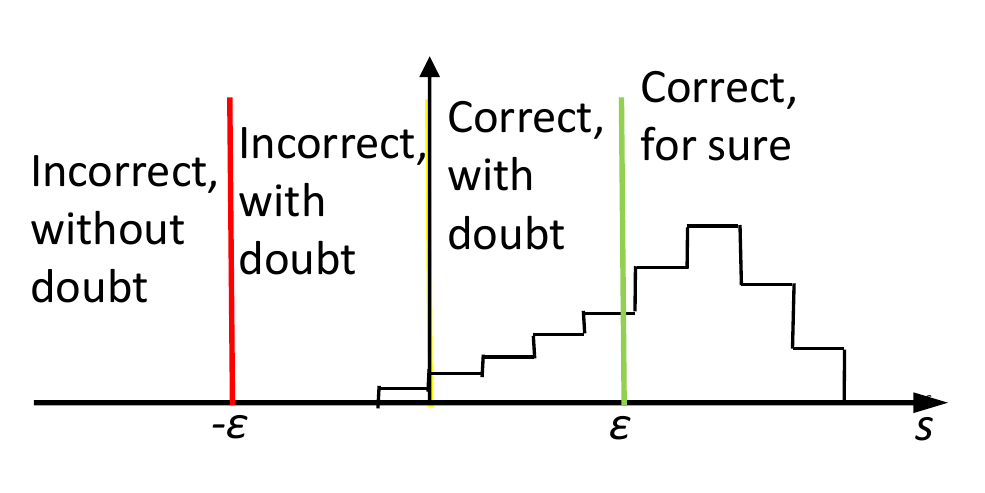}
\caption {Histogram of answers for trained community of agents: they should give a correct self-confident answer to the samples they own, and do not make large mistakes on all other examples they never met before.}
\label{Fig:ValidationHYstogram}
\end{figure}

Learning is minimisation of error functionals, which is defined for any selected sample and any local expert. This error function should be different for owners and non-ofners of the sample. If we assume that each smalpe has a single owner then the  error function presented in Fig.~\ref{Fig:OwnerNonOwner} can be used.
\begin{figure}
\centering
\includegraphics[width=0.95\columnwidth]{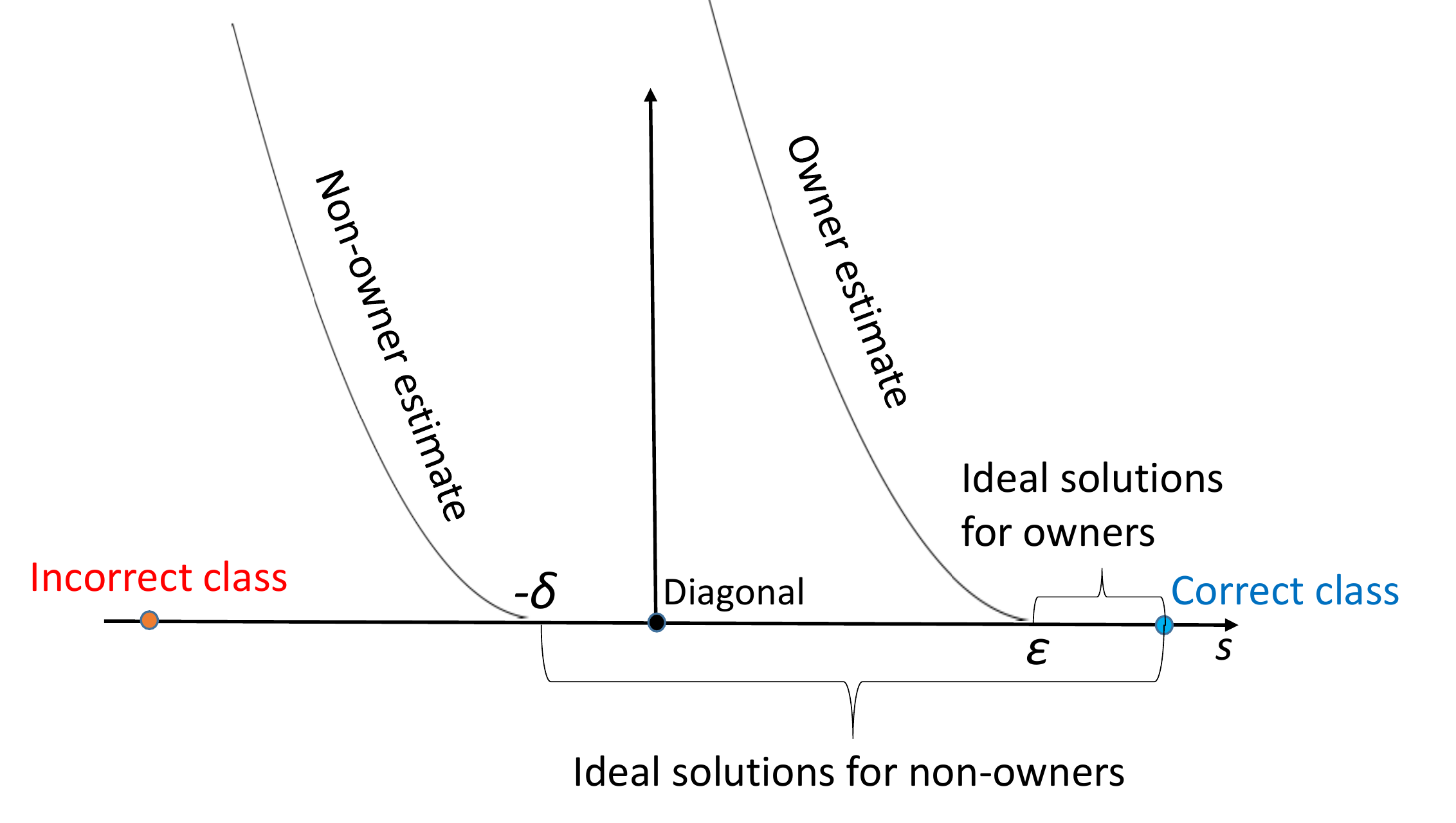}
\caption {Soft margin error function for owners and non-owners (one owner).}
\label{Fig:OwnerNonOwner}
\end{figure}

\begin{figure}
\centering
\includegraphics[width=0.95\columnwidth]{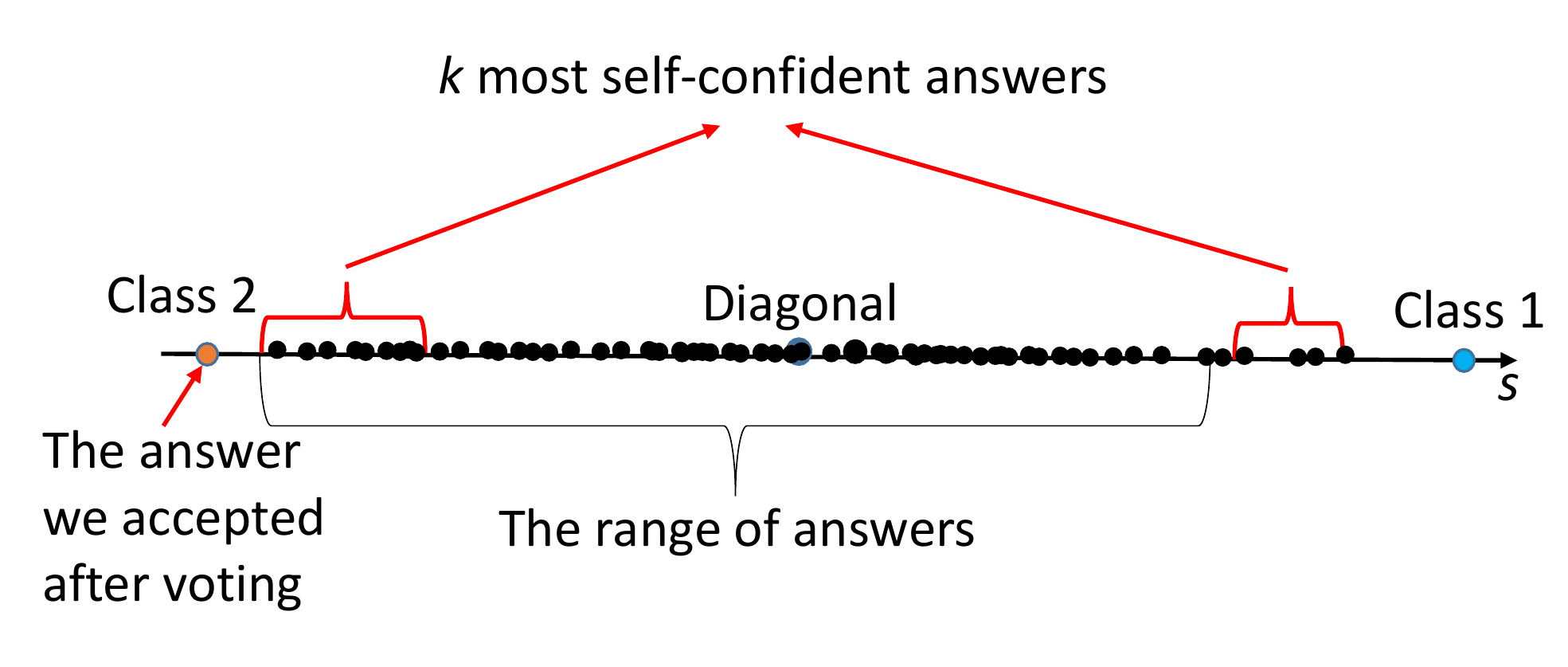}
\caption {Interperation of community answer with collective ownership: Voting of $k$ most self-confident winners.}
\label{Fig:Voting}
\end{figure}
Voting of $k$ most self-confident experts (Fig.~\ref{Fig:Voting}) can make the decision more stable. This voting may be organised with  weights of votes, which depend on the individual experts' level of confidence, or without weights, just as a simple voting.   The modified error function for system with collective ownership (each sample has $k$ owners)  is needed (Fig.~\ref{Fig:kWinnersTA}). This function is constructed to provide proper answers of all $k$ owners.

\begin{figure}
\centering
\includegraphics[width=0.95\columnwidth]{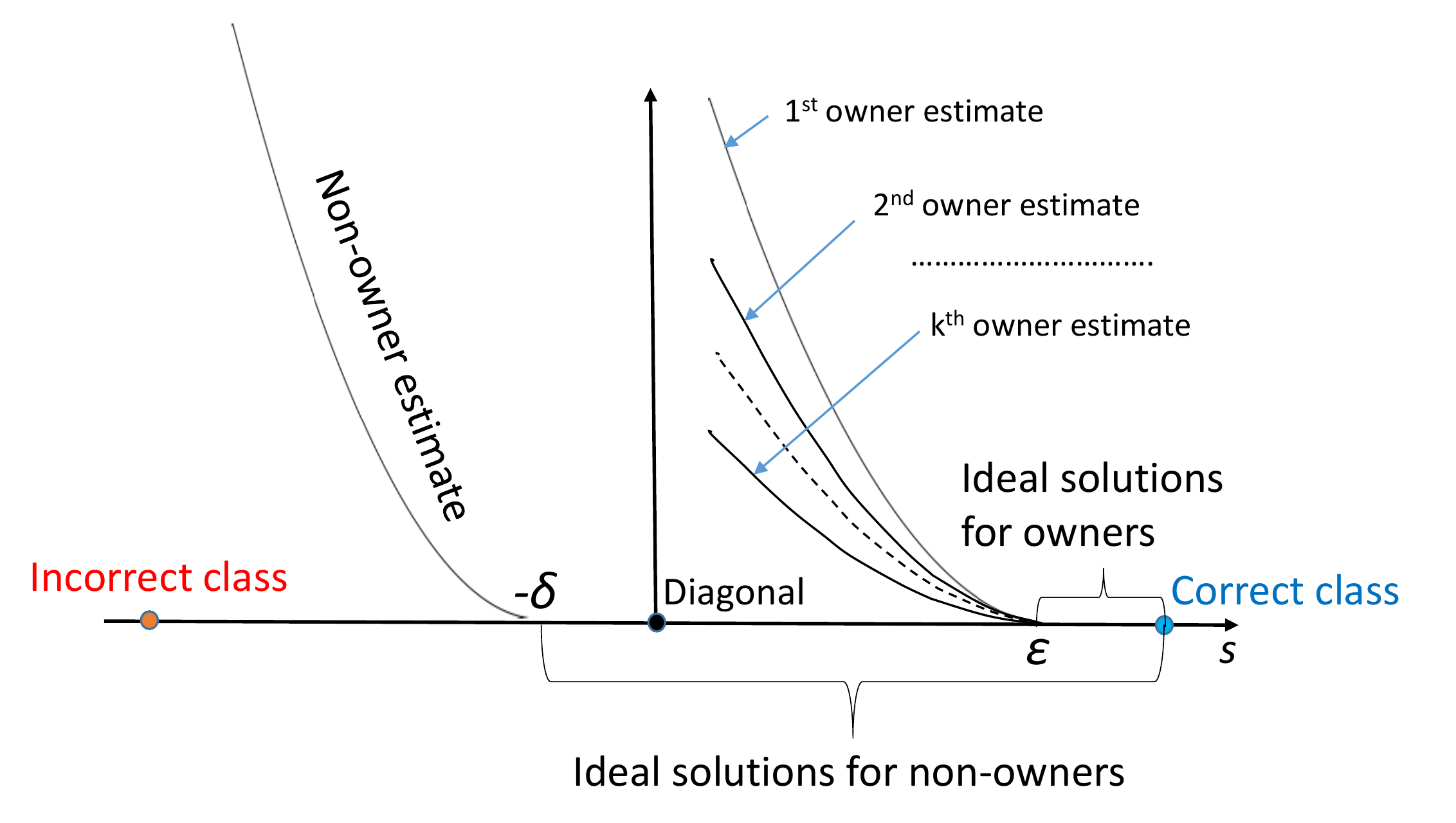}
\caption {Soft margin error function for owners and non-owners ( $k$  owners).}
\label{Fig:kWinnersTA}
\end{figure}

\section{Self-learning Stage: Communities and Recommender Systems}

After the stage of supervising learning, community of local experts can start working with new, previously unlabeled data. For a new example, the owners will be identified and the task will be solved by the owners following decision  {from} Figs.~\ref{Fig:WinnerTA}, \ref{Fig:Voting} or similar rules with distribution of responsibility between the most self-confident experts. After   such labeling steps the learning cycles should follow with improvement of experts' performance (they should give the correct self-confident answers to the samples they own, and do not make large mistakes for all other examples).

This regular alternation, solving new tasks -- learning -- solving new task -- ..., provides
adaptation to the graduate change in reality and assimilation of growing data. It is not compulsory that all local experts are answering the same tasks. A sort of soft {\em biclustering systems} of experts and problems should be implemented to link a problem to potential experts and an expert to tasks it can own. Selection of experts should be done with some excess  to guarantee sufficient number of selected skilled experts for correct solution. Originally \cite{GilGorbanMir1991}, a version of neural network associative memory was proposed to calculate the relative  weight  of an expert for solution of a problem (we can call it ``affinity of an  expert to a problem''). A well-developed technology of recommender systems \cite{recommender2015} includes many solutions potentially usable for recommendations of local experts to problems and problems to local experts.
Implementation of a recommender system for  the assignment of local experts to solve problems transforms the community of agents into hierarchical ``social network'' with various nodes and groups.

\section{Correctors, Knowledge Transfer, and Interiorisation}

Objectives of the  community self-learning are:
\begin{itemize}
\item{Assimilation of incrementally  growing data;}
\item{Adaptation to graduate change in reality;}
\item{Non-iterative knowledge transfer from the locally best experts to other agents;}
\end{itemize}

In the community self-learning process for each sample the locally best experts (owners) find the label. After the labeling, the skills should be improved. The supervised learning of large multiagent system requires large resources. It should no destroy the previous skills and, therefore, the large labeled data base of previous tasks should be used. It can require large memory and many iterations, which involve all the local experts. It is desirable to correct the errors (or increase the level of confidence, if it is too low) without destroying of previously learned skills. It is also very desirable to avoid usage of large database and long iterative process.

Communities of AI systems in real world will work on the basis of heterogeneous networks of computational devices and in heterogeneous infrastructure.  Real-time correction of  mistakes in such heterogeneous systems by re-training is not always viable due to the resources involved.
We can, therefore, formulate the technical requirements for the correction procedures \cite{GorTyukPhil2018}. {\em Corrector} should:
\begin{itemize}
\item be simple;
\item not destroy the existing skills of the AI systems;
\item allow fast non-iterative learning;
\item allow correction of new mistakes without destroying of previous corrections.
\end{itemize}
Surprisingly, all these requirements can be met  in sufficiently high dimensions. For this purpose, we propose to employ the concept of corrector of legacy AI systems, developed recently \cite{GorTyuRom2016,GorbanRomBurtTyu2016} on the basis of stochastic separation theorems \cite{GorbTyu2017}. For high-dimensional distributions in $n$-dimensional space every point from a large finite set can be separated from all other points by a simple linear discriminant. The size of this finite set can grow exponentially with $n$. For example, for the equidistribution in an $100$-dimensional ball,  with probability $>0.99$ every point in $2.7\cdot 10^6$ independently chosen random points  is linearly separable from the set of all other points.

The idea of a corrector is simple. It corrects an error of a single local expert. Separate  the sample with error from all other samples by a simple discriminant. This discriminant splits the space of samples into two subsets:  the sample with errors belongs to one of them, and all other samples belong to ``another half''. Modify the decision rule for the set, which includes the erroneous sample. This is corrector of a legacy AI system (Fig.~\ref{Fig:Corrector}). Inputs for this corrector may include input signals, and any internal or output signal of the AI system.

\begin{figure}
\centering
\includegraphics[width=0.95\columnwidth]{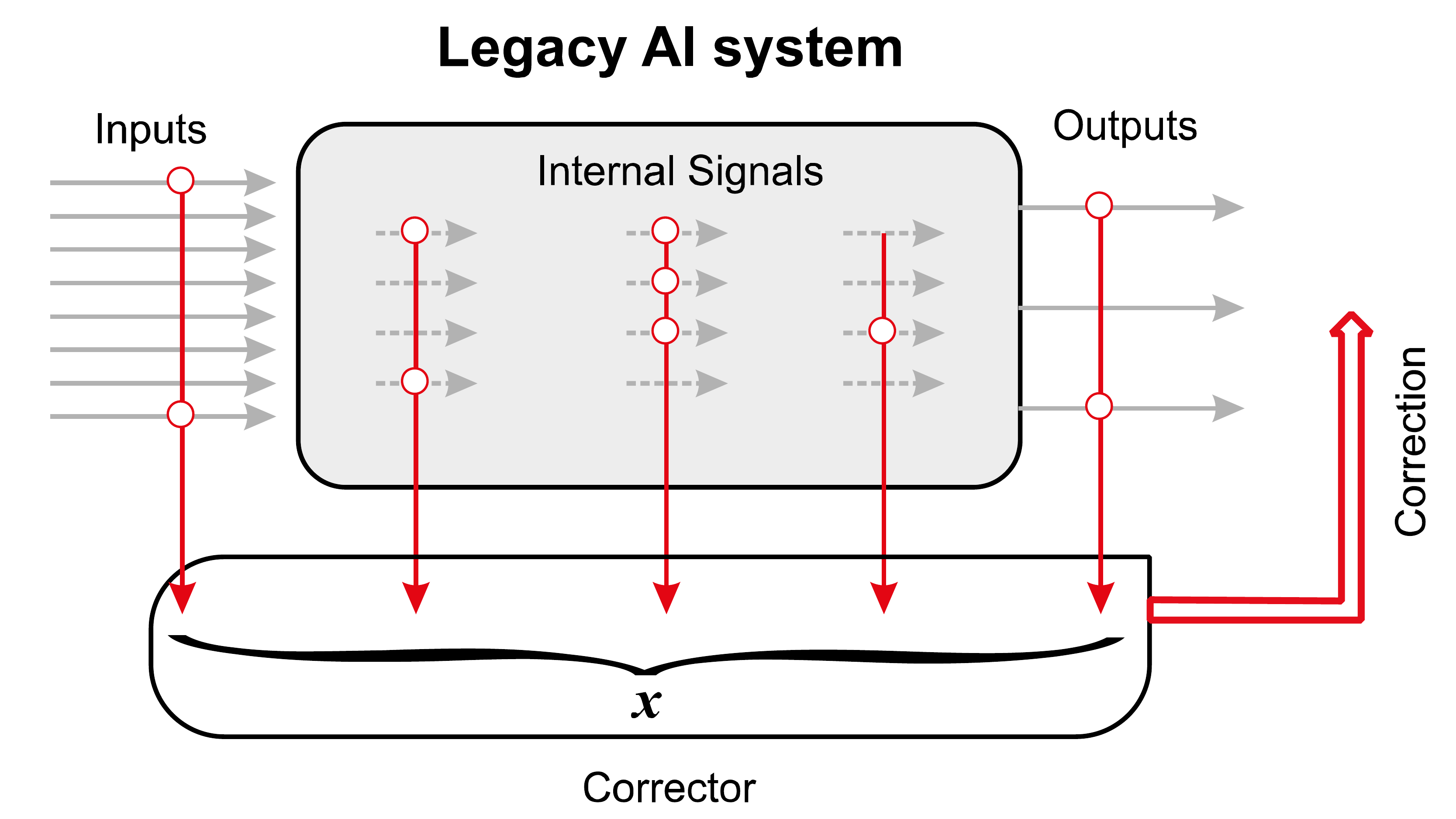}
\caption {Corrector of AI errors.}
\label{Fig:Corrector}
\end{figure}

One corrector can correct several errors (it is useful to cluster them before corrections). For correction of many errors, cascades of correctors are employed \cite{GorTyukPhil2018}: the AI system with the first corrector is a new legacy AI system and can be corrected further, as presented in Fig.~\ref{Fig:AIcorrectorsCascade}. In this diagram, the original legacy AI system (shown as Legacy AI System 1) is supplied with a corrector altering its responses. The combined new AI system can in turn be augmented by another corrector, leading to a cascade of AI correctors.

\begin{figure}
\centering
\includegraphics[width=0.95\columnwidth]{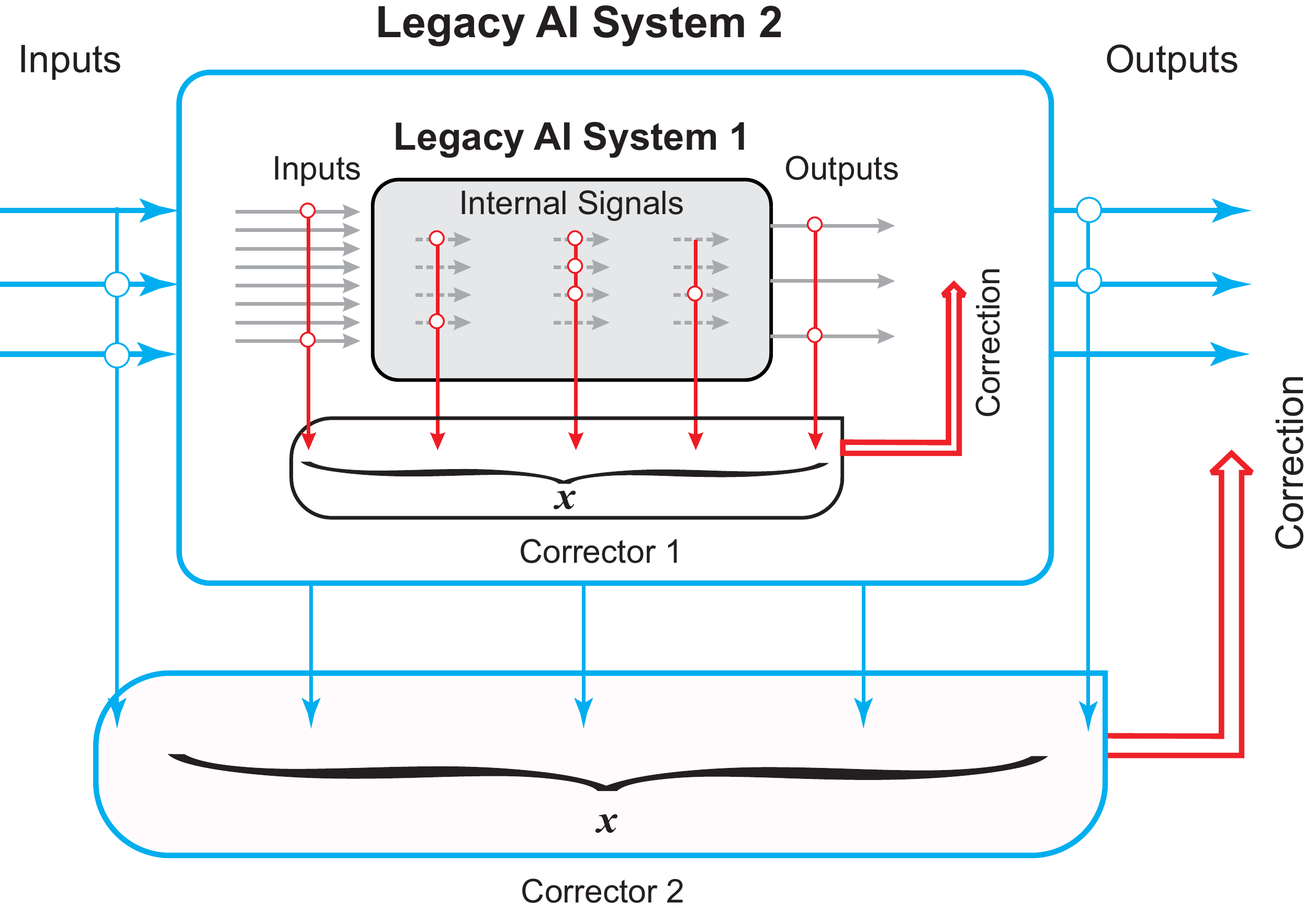}
\caption{Cascade of AI correctors}\label{Fig:AIcorrectorsCascade}
\end{figure}

Fast {\em  knowledge transfer} between AI systems can be organised using correctors \cite{Tyukin2017a}. The ``teacher'' AI  labels the samples, and a ``student'' AI also attempts  to label them. If their decisions coincide (with the desired level of confidence) then  nothing happens. If they do not coincide (or the level of confidence of a student is too low) then a corrector is created for the student. From the technological point of view it is more efficient to collect samples with student's errors, then cluster these samples and create correctors for the clusters, not for the individual mistakes. Moreover, new real-world samples are not compulsory needed in the knowledge transfer process. Just a large set of randomly generated (simulated) samples labeled by the teacher AI and the student AI can be used for correction of the student AI with skill transfer from the teacher AI.

\begin{figure*}
\centering
\includegraphics[width=0.8\textwidth]{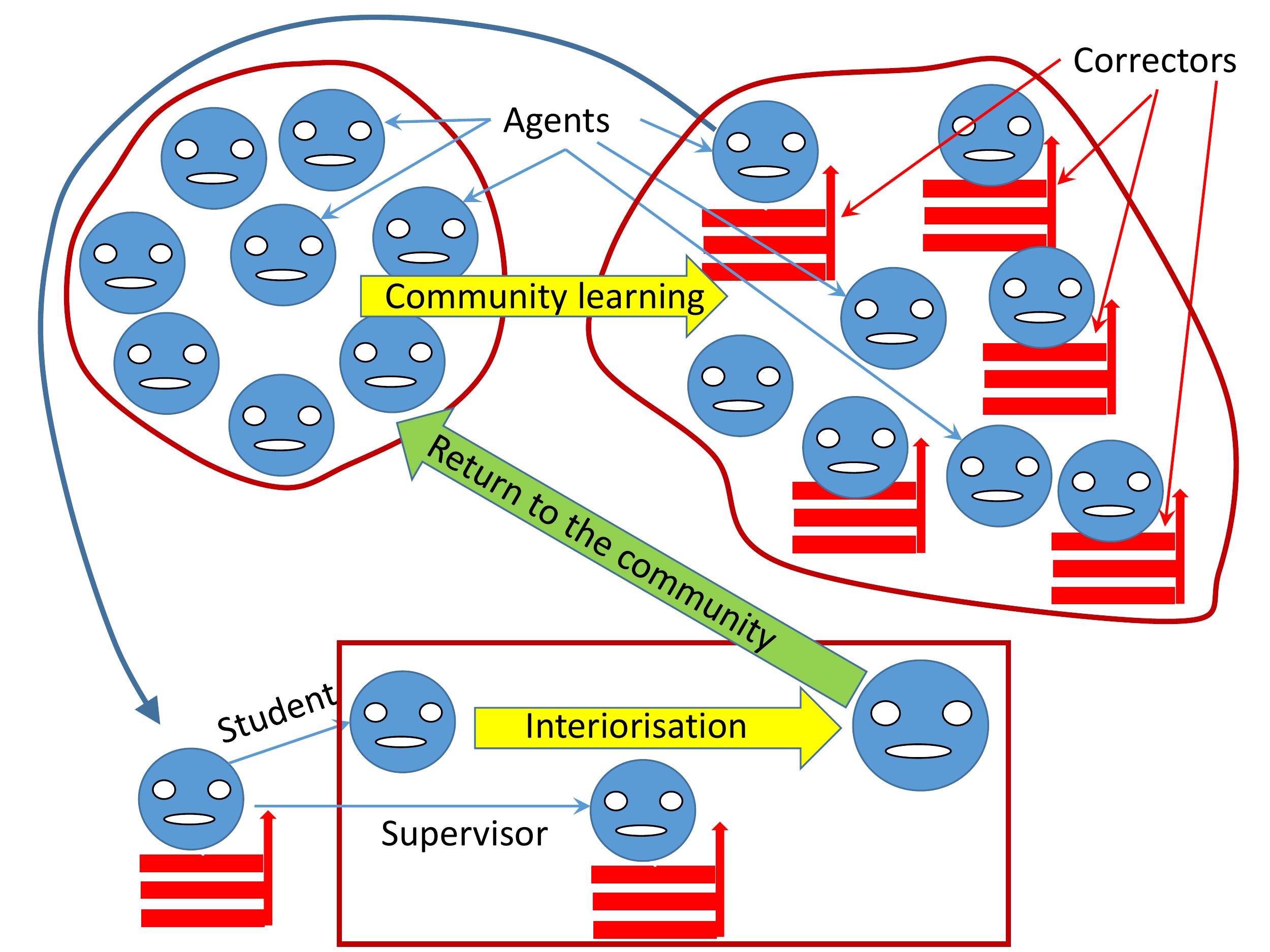}
\caption {Community learning, self-learning, and interiorisation of knowledge.}
\label{Fig:CommunityLearning}
\end{figure*}

 Correctors assimilate new knowledge in the course of the  community self-learning process (Fig.~\ref{Fig:CommunityLearning}). After collection of a sufficiently large cascade of correctors, a local expert needs to assimilate this knowledge in its internal structure. The main reason for such {\em interiorisation} is restoring of the regular essentially high-dimensional structure of the distribution of preprocessed  samples with preservation of skills. This process can be iterative but it is much simpler that the initial supervising learning. The local expert with the cascade of correctors becomes the teacher AI, and the same expert without correctors becomes the student AI (see Fig.~\ref{Fig:CommunityLearning}). Available real dataset can be supplemented by the randomly simulated samples and, after iterative learning the skills from the teacher are transferred to the student (if the capacity of the student is sufficient). The student with updated skills returns to the community of local experts.

Two important subsystems are not present in Fig.~\ref{Fig:CommunityLearning}): the manager -- recommender and the ultimate auditor. The {\em manager -- recommender} distributes tasks to local experts and local experts to tasks. It takes decisions on the basis of the previous experience of problem solving and assigns experts to problems with an adequate surplus, for reliability, and with some stochastisation, for the training of various experts and for the extension of experts' pool.

In practice, the self-learning and self-labeling of  samples performed by the selected local experts is supplemented by the labeling of  samples and critics of decisions by an {\em ultimate auditor}. First of all, this auditor is the real practice  itself: the real consequences of the decisions return to the systems. Secondly, the ultimate audit may include inspection by  a qualified human or by a special AI audit system with additional skills.

\section{Mathematical foundations of non-destructive  AI correction}

\subsection{General stochastic separation theorem}

 B{\'a}r{\'a}ny and  Zolt{\'a}n \cite{convhull} studied properties of high-dimensional polytopes deriving from uniform distribution in the  {$n$-dimensional} unit ball. They found that in the  envelope of $M$ random  {points} {\em  all } of the points are on the boundary of their convex hull and none belong to the interior (with probability greater than $1-c^2$, provided that $M \leq c 2^{n/2}$, where $c>0$ in an arbitrary constant). They also show that the bound on $M$ is nearly tight, up to polynomial factor in $n$. Donoho and Tanner \cite{DonohoTanner2009} derived a similar result for i.i.d.  points from the Gaussian distribution.
They also mentioned that in applications  it often seems that Gaussianity is not required and stated the problem of characterisation of   ensembles leading to the same qualitative effects (`phase transitions'), which are found  for Gaussian polytopes.

Recently, we noticed that these results could be proven for many other distributions, indeed, and one more important (and surprising) property is also typical: {\em  every point in  this $M$-point random set can be separated from  all other points of this set by the simplest linear Fisher discriminant} \cite{GorTyuRom2016,GorbTyu2017}. This observation allowed us to create the corrector technology for legacy AI systems \cite{GorbanRomBurtTyu2016}. We used the `thin shell' measure concentration inequalities to prove these results \cite{GorbTyu2017,GorTyukPhil2018}. Separation by linear Fisher's discriminant is practically most important {\em Surprise 4} in addition to three surprises mentioned in \cite{DonohoTanner2009}.

The standard approach  {assumes that} the random set consists of independent identically distributed (i.i.d.) random vectors. The new stochastic separation theorem presented below does not assume that the points are identically distributed. It can be very important: in the real practice the new datapoints are not compulsory taken from the same distribution that the previous points. In that sense the typical situation with the real data flow is far from an i.i.d. sample (we are grateful to G. Hinton for this important remark). This new theorem gives also an answer to the {\em open problem} from \cite{DonohoTanner2009}: it gives the general characterisation of the wide class of distributions with stochastic separation theorems (the SmAC condition below). Roughly speaking, this class consists of distributions without sharp peaks in sets with exponentially small volume (the precise formulation is below).
We call this property ``Smeared Absolute Continuity'' (or SmAC for short)
with respect to the Lebesgue measure: the absolute continuity means that the sets of zero measure have zero probability, and the SmAC condition  below requires that the sets with exponentially small volume should not have high probability.
%
%
Below ${\mathbb{B}}_n$ is a unit ball in ${\mathbb R}^n$ and $V_n$ denotes the $n$-dimensional Lebesgue measure.

Consider a  \emph{family} of distributions, one for each pair of positive integers $M$ and $n$. The general SmAC condition is
\begin{definition}
The joint distribution of $\boldsymbol{x}_1, \boldsymbol{ x}_2, \dots, \boldsymbol{ x}_M$ has SmAC property if there are exist constants $A>0$, {$B\in(0,1)$}, and $C>0$, such that for every positive integer $n$, any convex set $S \in {\mathbb R}^n$ such that
$$
\frac{V_n(S)}{V_n({\mathbb{B}}_n)} \leq A^n,
$$
any index $i\in\{1,2,\dots,M\}$, and any points $\boldsymbol{ y}_1, \dots, \boldsymbol{ y}_{i-1}, \boldsymbol{ y}_{i+1}, \dots, \boldsymbol{ y}_M$ in ${\mathbb R}^n$,
we have
\begin{equation}\label{eq:condstar}
{\mathbb P}(\boldsymbol{ x}_i \in {\mathbb{B}}_n \setminus S\, | \, \boldsymbol{ x}_j=\boldsymbol{ y}_j, \forall j \neq i) \geq 1-CB^n.
\end{equation}
\end{definition}

 We remark that
\begin{itemize}
\item We do not require for SmAC condition to hold for \emph{all} $A<1$, just for \emph{some} $A>0$. However, constants $A$, $B$, and $C$ should be independent from $M$ and $n$.
\item  We do not require that $\boldsymbol{ x}_i$ are independent. If they are, \eqref{eq:condstar} simplifies to
$$
{\mathbb P}(\boldsymbol{ x}_i \in {\mathbb{B}}_n \setminus S) \geq 1-CB^n.
$$
\item  We do not require that $\boldsymbol{ x}_i$ are identically distributed.
\item  The unit ball ${\mathbb{B}}_n$ in  SmAC condition can be replaced by an arbitrary ball, due to rescaling.
\item  We do not require the distribution to have a bounded support - points $\boldsymbol{ x}_i$ are allowed to be outside the ball, but with exponentially small probability.
\end{itemize}

The following proposition establishes a sufficient condition for SmAC condition to hold.

\begin{proposition}
Assume that $\boldsymbol{ x}_1, \boldsymbol{ x}_2, \dots, \boldsymbol{ x}_M$ are continuously distributed in ${\mathbb{B}}_n$ with conditional density satisfying
\begin{equation}\label{eq:condden}
\rho_n(\boldsymbol{ x}_i \,|\,\boldsymbol{ x}_j=\boldsymbol{ y}_j, \forall j \neq i) \leq \frac{C}{r^n V_n({\mathbb{B}}_n)}
\end{equation}
for any $n$, any index $i\in\{1,2,\dots,M\}$, and any points $\boldsymbol{ y}_1, \dots, \boldsymbol{ y}_{i-1}, \boldsymbol{ y}_{i+1}, \dots, \boldsymbol{ y}_M$ in ${\mathbb R}^n$, where $C>0$ and $r>0$ are some constants. Then SmAC condition holds with the same $C$, any $B \in (0,1)$, and $A=Br$.
\end{proposition}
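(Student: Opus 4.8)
The plan is to show that the density bound \eqref{eq:condden} forces the conditional probability of landing in any set of exponentially small volume to be exponentially small, which is precisely the complement of the assertion \eqref{eq:condstar}. Throughout, write $\mathbb{P}(\,\cdot\mid\boldsymbol{y})$ and $\rho_n(\,\cdot\mid\boldsymbol{y})$ for the conditional probability and conditional density given $\boldsymbol{x}_j=\boldsymbol{y}_j$ for all $j\neq i$. Since the vectors are continuously distributed in $\mathbb{B}_n$, the event $\{\boldsymbol{x}_i\in\mathbb{B}_n\setminus S\}$ is, up to a null set, the complement of $\{\boldsymbol{x}_i\in S\}$; hence it suffices to bound $\mathbb{P}(\boldsymbol{x}_i\in S\mid\boldsymbol{y})$ from above by $CB^n$.

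First I would write this conditional probability as the integral of the conditional density over $S\cap\mathbb{B}_n$ (the support being contained in the ball) and insert the pointwise bound \eqref{eq:condden}:
\[
\mathbb{P}(\boldsymbol{x}_i\in S\mid\boldsymbol{y})=\int_{S\cap\mathbb{B}_n}\rho_n(\boldsymbol{x}_i\mid\boldsymbol{y})\,d\boldsymbol{x}_i\le\frac{C\,V_n(S\cap\mathbb{B}_n)}{r^n V_n(\mathbb{B}_n)}.
\]
Next I would discard the intersection using $V_n(S\cap\mathbb{B}_n)\le V_n(S)$ and invoke the volume hypothesis $V_n(S)\le A^n V_n(\mathbb{B}_n)$, so that the $V_n(\mathbb{B}_n)$ factors cancel and the right-hand side collapses to $C(A/r)^n$.

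The final step is the choice of constants: setting $A=Br$ gives $A/r=B$, whence $\mathbb{P}(\boldsymbol{x}_i\in S\mid\boldsymbol{y})\le CB^n$ and therefore $\mathbb{P}(\boldsymbol{x}_i\in\mathbb{B}_n\setminus S\mid\boldsymbol{y})\ge 1-CB^n$, which is exactly \eqref{eq:condstar} with the same $C$, any chosen $B\in(0,1)$, and the corresponding $A=Br>0$. I do not anticipate a genuine obstacle: the argument is a single integration of a uniform density bound, and convexity of $S$ is never used in this direction (it enters only later, when separation by a Fisher discriminant is deduced). The one point deserving a moment's care is the reduction of the ball-complement event to the complement of $\{\boldsymbol{x}_i\in S\}$, which rests on the stated hypothesis that the distribution is supported in $\mathbb{B}_n$.
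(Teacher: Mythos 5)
Your proposal is correct and follows essentially the same route as the paper's own proof: integrate the conditional density bound \eqref{eq:condden} over $S$, apply the volume hypothesis $V_n(S)\le A^nV_n(\mathbb{B}_n)$, and observe that $A=Br$ yields the bound $CB^n$. The only addition is your explicit remark that the support assumption reduces $\{\boldsymbol{x}_i\in\mathbb{B}_n\setminus S\}$ to the complement of $\{\boldsymbol{x}_i\in S\}$, a step the paper leaves implicit.
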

\begin{proof}

\begin{equation*}
\begin{split}
{\mathbb P}(\boldsymbol{ x}_i \in S\, |& \, \boldsymbol{ x}_j=\boldsymbol{ y}_j, \forall j \neq i) = \int\limits_S \rho_n(\boldsymbol{ x}_i \,|\,\boldsymbol{ x}_j=\boldsymbol{ y}_j, \forall j \neq i) dV \\& \leq \int\limits_S \frac{C}{r^n V_n({\mathbb{B}}_n)} dV = V_n(S)\frac{C}{r^n V_n({\mathbb{B}}_n)} \\ &\leq A^n V_n({\mathbb{B}}_n)\frac{C}{r^n V_n({\mathbb{B}}_n)} = CB^n.
\end{split}
\end{equation*}

\end{proof}

If $\boldsymbol{ x}_1, \boldsymbol{ x}_2, \dots, \boldsymbol{ x}_M$ are independent with $\boldsymbol{ x}_i$ having density $\rho_{i,n}:{\mathbb{B}}_n \to [0,\infty)$,  \eqref{eq:condden} simplifies to
\begin{equation}\label{eq:indden}
\rho_{i,n}(\boldsymbol{ x}) \leq \frac{C}{r^n V_n({\mathbb{B}}_n)}, \quad \forall n, \, \forall i, \, \forall \boldsymbol{x} \in {\mathbb{B}}_n,
\end{equation}
where $C>0$ and $r>0$ are some constants.

With $r=1$, \eqref{eq:indden} implies that SmAC condition holds for probability distributions whose density is bounded by a constant times density $\rho_n^{uni}:=\frac{1}{V_n({\mathbb{B}}_n)}$ of uniform distribution in the unit ball. With arbitrary $r>0$, \eqref{eq:indden} implies that SmAC condition holds whenever ration $\rho_{i,n}/\rho_n^{uni}$ grows at most exponentially in $n$. This condition is general enough to hold for many distributions of practical interest.

\begin{example}{(Unit ball)}\label{ex:ball}
If $\boldsymbol{ x}_1, \boldsymbol{ x}_2, \dots, \boldsymbol{ x}_M$ are i.i.d random points from the equidistribution in the unit ball, then \eqref{eq:indden} holds with $C=r=1$.
\end{example}
\begin{example}{({Randomly perturbed} data)}\label{ex:noisy}
Fix parameter $\epsilon \in (0,1)$ ({random perturbation} parameter).
Let $\boldsymbol{ y}_1, \boldsymbol{ y}_2, \dots, \boldsymbol{ y}_M$ be the set of $M$ arbitrary (non-random) points inside the ball with radius $1-\epsilon$ in ${\mathbb R}^n$. They might be clustered in arbitrary way, all belong to a subspace of very low dimension, etc. Let
$\boldsymbol{ x}_i, i=1,2,\dots,M$ be a point, selected uniformly at random from a ball with center $\boldsymbol{ y}_i$ and radius $\epsilon$. We think about $\boldsymbol{ x}_i$ as ``perturbed'' version of $\boldsymbol{ y}_i$.
In this model, \eqref{eq:indden} holds with $C=1$, $r=\epsilon$.
\end{example}

\begin{example}{(Uniform distribution in a cube)}\label{ex:cube}
Let $\boldsymbol{ x}_1, \boldsymbol{ x}_2, \dots, \boldsymbol{ x}_M$ be i.i.d random points from the equidistribution in the unit cube. Without loss of generality, we can scale the cube to have side length $s=\sqrt{4/n}$. Then \eqref{eq:indden} holds with $r<\sqrt{\frac{2}{\pi e}}$. 
\end{example}
\begin{remark}
In this case,
\begin{equation*}
\begin{split}
&V_n({\mathbb{B}}_n)\rho_{i,n}(\boldsymbol{ x}) = \frac{V_n({\mathbb{B}}_n)}{(\sqrt{4/n})^n} = \frac{\pi^{n/2}/\Gamma(n/2+1)}{(4/n)^{n/2}}\\& < \frac{(\pi/4)^{n/2}n^{n/2}}{\Gamma(n/2)} \approx \frac{(\pi/4)^{n/2}n^{n/2}}{\sqrt{4\pi/n}(n/2e)^{n/2}} \leq \frac{1}{2\sqrt{\pi}}\left(\sqrt{\frac{\pi e}{2}}\right)^n,
\end{split}
\end{equation*}
where $\approx$ means Stirling's approximation for gamma function $\Gamma$.
\end{remark}

\begin{example}{(Product distribution in unit cube)}\label{ex:product}
Let $\boldsymbol{ x}_1, \boldsymbol{ x}_2, \dots, \boldsymbol{ x}_M$ be independent random points from the product distribution in the unit cube, with component $j$ of point $\boldsymbol{ x}_i$ having a continuous distribution with density $\rho_{i,j}$. Assume that all $\rho_{i,j}$ are bounded from above by some absolute constant $K$. Then \eqref{eq:indden} holds with $r<\frac{1}{K}\sqrt{\frac{2}{\pi e}}$ (after appropriate scaling of the cube).
\end{example}

A finite set $F \subset {\mathbb R}^n$ is called \emph{linearly separable} if the following equivalent conditions hold.
\begin{itemize}
\item  For each $\boldsymbol{ x} \in F$ there exists a linear functional $l$ such that $l(\boldsymbol{ x}) > l(\boldsymbol{ y})$ for all $\boldsymbol{y} \in F$, $\boldsymbol{ y}
\neq \boldsymbol{x}$;
\item  Each $\boldsymbol{ x} \in F$ is an extreme point (vertex) of convex hull of $F$.
\end{itemize}

Below we prove the separation theorem for distributions satisfying SmAC condition. The proof is based on the following result, see \cite{volume}

\begin{proposition}\label{prop:volest}
Let
$$
V(n,M)=\frac{1}{V_n({\mathbb{B}}_n)}\max\limits_{\boldsymbol{ x}_1, \dots, \boldsymbol{x}_M \in {\mathbb{B}}_n} V_n(\text{conv}\{\boldsymbol{x}_1, \dots, \boldsymbol{ x}_M\}),
$$
where $\text{conv}$ denotes the convex hull.
Then
$$
V(n,c^n)^{1/n} < (2e \log c)^{1/2}(1+o(1)), \quad 1<c<1.05.
$$
\end{proposition}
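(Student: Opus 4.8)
The plan is to control $V(n,M)$ through the \emph{average of the support function over the sphere} together with Urysohn's inequality, thereby reducing everything to a union bound over the $M$ vertices. Fix $n$ and points $\boldsymbol{x}_1,\dots,\boldsymbol{x}_M\in\mathbb{B}_n$ (near-)attaining the maximum in the definition of $V(n,M)$, put $K=\mathrm{conv}\{\boldsymbol{x}_1,\dots,\boldsymbol{x}_M\}$, and write its support function as $h_K(\boldsymbol{\theta})=\max_{i}\langle\boldsymbol{\theta},\boldsymbol{x}_i\rangle$. Let $\bar h_K=\mathbb{E}_{\boldsymbol{\theta}}[h_K(\boldsymbol{\theta})]$ denote its average over $\boldsymbol{\theta}$ uniform on the unit sphere $S^{n-1}$; for the unit ball this average equals $1$.

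\textbf{Step 1 (Urysohn).} Urysohn's inequality, in the normalisation above, states that $(V_n(K)/V_n(\mathbb{B}_n))^{1/n}\le \bar h_K$, so that $V(n,M)^{1/n}\le \mathbb{E}_{\boldsymbol{\theta}}[\max_i\langle\boldsymbol{\theta},\boldsymbol{x}_i\rangle]$. This is the step that makes the estimate work and must not be replaced by a naive triangulation of $K$ into simplices with vertices among the $\boldsymbol{x}_i$: a polytope with $M=c^n$ vertices may have of order $M^{n/2}$ faces and the simplices overlap, so summing their volumes produces a doubly exponential, useless bound.

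\textbf{Step 2 (union bound for $\bar h_K$).} Since $\|\boldsymbol{x}_i\|\le 1$, for uniform $\boldsymbol{\theta}$ the upper tail of each marginal $\langle\boldsymbol{\theta},\boldsymbol{x}_i\rangle$ is bounded by that of $\langle\boldsymbol{\theta},\boldsymbol{e}\rangle$ for a fixed unit vector $\boldsymbol{e}$, which satisfies the sharp spherical tail bound $\mathbb{P}(\langle\boldsymbol{\theta},\boldsymbol{e}\rangle\ge t)\le e^{-nt^2/2}$; thus each marginal is sub-Gaussian with variance proxy $1/n$. The standard bound on the expected maximum of $M$ sub-Gaussian variables then gives $\mathbb{E}_{\boldsymbol{\theta}}[\max_i\langle\boldsymbol{\theta},\boldsymbol{x}_i\rangle]\le \sqrt{2\log M/n}\,(1+o(1))$. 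With $M=c^n$ this equals $\sqrt{2\log c}\,(1+o(1))$, whence $V(n,c^n)^{1/n}\le\sqrt{2\log c}\,(1+o(1))\le\sqrt{2e\log c}\,(1+o(1))$, which is the claim.

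\textbf{On the constant and the range $1<c<1.05$.} The route above actually yields the stronger constant $\sqrt{2}$ in place of $\sqrt{2e}$, so the factor $\sqrt{e}$ and the restriction to $c<1.05$ in the stated bound should be read as artefacts of a more self-contained estimate in the cited source that avoids the (nonelementary) Urysohn inequality: the extra $\sqrt{e}$ is precisely the Stirling factor appearing when $V_n(\mathbb{B}_n)=\pi^{n/2}/\Gamma(n/2+1)$ is bounded by hand through $\Gamma(n/2+1)\ge(n/2e)^{n/2}$, and the condition $1<c<1.05$ keeps $\log c$ small so that the leading term dominates, the $(1+o(1))$ is uniform, and the bound stays below $1$ (hence nontrivial). \emph{The main obstacle} is twofold: first, establishing the expected-maximum inequality with the correct leading constant and a genuinely \emph{uniform} $o(1)$ as $n\to\infty$ while $i$ ranges over exponentially many indices; and second, if one insists on reproducing the paper's exact constant, identifying the precise elementary surrogate for Urysohn (equivalently, the hand estimate of $V_n(\mathbb{B}_n)$) that trades the optimal $\sqrt 2$ for the cruder $\sqrt{2e}$.
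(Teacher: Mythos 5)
The paper does not actually prove Proposition~\ref{prop:volest}: it is imported verbatim from B\'ar\'any and F\"uredi \cite{volume}, and the text only extracts the consequence \eqref{eq:volest}. So your argument is not competing with an in-paper proof but with a citation, and on its own terms it is correct. Urysohn's inequality in the normalisation $(V_n(K)/V_n({\mathbb{B}}_n))^{1/n}\le\int_{S^{n-1}}h_K\,d\sigma$ (equality for the ball) is the right tool, the spherical cap estimate $\sigma\{\theta:\langle\theta,e\rangle\ge t\}\le e^{-nt^2/2}$ is standard, and the union bound gives $\mathbb{E}_\theta[\max_i\langle\theta,\boldsymbol{x}_i\rangle]\le\sqrt{2\log M/n}\,(1+o(1))$, hence $V(n,c^n)^{1/n}\le\sqrt{2\log c}\,(1+o(1))$, which is stronger than the stated $\sqrt{2e\log c}$ and consistent with the matching lower bound from well-spread points on the sphere. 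One small tightening: rather than invoking ``the standard bound on the expected maximum of $M$ sub-Gaussian variables'' (whose usual hypothesis is a moment-generating-function bound, not a tail bound), integrate the tail directly: with $u=\sqrt{2\log M/n}$ one has $\mathbb{E}[\max_i\langle\theta,\boldsymbol{x}_i\rangle]\le u+M\int_u^{1}e^{-nt^2/2}\,dt\le u+\tfrac{1}{nu}=u\bigl(1+O(1/(n\log c))\bigr)$, which makes the $o(1)$ explicit and manifestly uniform over the configuration $\{\boldsymbol{x}_i\}$ for fixed $c>1$. Your reading of the constant and of the window $1<c<1.05$ as artefacts of the cited source is reasonable; and since Theorem~\ref{th:separation} uses the proposition only through the crude bound $V(n,b^n)<(3\sqrt{\log b})^n$, your sharper constant would even relax the constraint $b\le\exp((A/3)^2)$ there. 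In short: the citation buys the paper brevity; your route buys a self-contained proof (modulo Urysohn and the cap estimate) with the asymptotically sharp constant.
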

Proposition \ref{prop:volest} implies that for every $c \in (1, 1.05)$, there exists a constant $N(c)$, such that
\begin{equation}\label{eq:volest}
V(n,c^n) < ( {3} \sqrt{\log c})^{n}, \quad n>N(c).
\end{equation}

\begin{theorem}\label{th:separation}
Let $\{\boldsymbol{x}_1, \ldots , \boldsymbol{x}_M\}$ be a set of  random points in ${\mathbb R}^n$ from distribution satisfying SmAC condition. Then $\{\boldsymbol{x}_1, \ldots , \boldsymbol{x}_M\}$ is linearly separable with probability greater than $1-\delta$, $\delta>0$, provided that
$$
M \leq a b^n,
$$
where $$b=\min\{1.05, 1/B, \exp((A/3)^2)\},\;a=\min\{1, \delta/2C,b^{-N(b)}\}.$$
\end{theorem}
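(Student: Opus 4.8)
The plan is to reduce linear separability to a statement about convex hulls and then bound, via the SmAC condition together with the volume estimate of Proposition~\ref{prop:volest}, the probability that this statement fails. By the second characterisation of separability in the excerpt, the set $\{\boldsymbol{x}_1,\dots,\boldsymbol{x}_M\}$ fails to be separable exactly when some $\boldsymbol{x}_i$ is not an extreme point (vertex) of the convex hull of the whole set, i.e. when $\boldsymbol{x}_i \in \operatorname{conv}\{\boldsymbol{x}_j : j\neq i\}$ for some $i$. A union bound then gives
\[
{\mathbb P}(\text{not separable}) \le \sum_{i=1}^M {\mathbb P}\bigl(\boldsymbol{x}_i \in \operatorname{conv}\{\boldsymbol{x}_j : j\neq i\}\bigr).
\]
The snag is that the conditional set $S=\operatorname{conv}\{\boldsymbol{y}_j : j\neq i\}$ obtained by fixing $\boldsymbol{x}_j=\boldsymbol{y}_j$ need not have small volume, so SmAC cannot be applied to it as it stands. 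To get around this I would introduce the event $E=\bigcap_{k}\{\boldsymbol{x}_k\in{\mathbb B}_n\}$ that every point lands in the unit ball and split ${\mathbb P}(\text{not separable}) \le {\mathbb P}(\text{not separable}\cap E)+{\mathbb P}(E^c)$.

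Next I would estimate the two terms. Applying SmAC with the convex, zero-volume choice $S=\varnothing$ gives ${\mathbb P}(\boldsymbol{x}_k\in{\mathbb B}_n\mid\text{any conditioning})\ge 1-CB^n$, hence ${\mathbb P}(\boldsymbol{x}_k\notin{\mathbb B}_n)\le CB^n$ marginally, so ${\mathbb P}(E^c)\le MCB^n$. For the first term, on $E$ all the points $\boldsymbol{y}_j$, $j\neq i$, lie in ${\mathbb B}_n$, so $S=\operatorname{conv}\{\boldsymbol{y}_j : j\neq i\}$ is a convex subset of the ball with normalised volume at most $V(n,M-1)$ (adding points cannot shrink a convex hull). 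Since $M-1< M \le b^n$ and $b\le 1.05$, the estimate \eqref{eq:volest} yields $V(n,M-1) < (3\sqrt{\log b})^n$ for $n>N(b)$, and the constraint $b\le\exp((A/3)^2)$ is exactly what makes $3\sqrt{\log b}\le A$, giving $V_n(S)/V_n({\mathbb B}_n)\le A^n$. Thus SmAC applies to this $S$: as $\boldsymbol{x}_i\in S$ forces $\boldsymbol{x}_i\notin{\mathbb B}_n\setminus S$, we get ${\mathbb P}(\boldsymbol{x}_i\in S\mid \boldsymbol{y}_j,\,j\neq i)\le CB^n$ on $E$. Integrating over the conditioning and summing over $i$ gives ${\mathbb P}(\text{not separable}\cap E)\le MCB^n$, whence ${\mathbb P}(\text{not separable})\le 2MCB^n$.

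It then remains to feed in the bound on $M$ and settle the low-dimensional case. Using $M\le ab^n$, $b\le 1/B$ (so $(bB)^n\le 1$), and $a\le\delta/2C$, I obtain $2MCB^n\le 2aC(bB)^n\le 2aC\le\delta$, which is the claimed bound ${\mathbb P}(\text{separable})\ge 1-\delta$. Finally, \eqref{eq:volest} is only available for $n>N(b)$; for $n\le N(b)$ the factor $a\le b^{-N(b)}$ forces $M\le ab^n\le b^{\,n-N(b)}\le 1$ (since $b>1$), so there is at most one point and separability is trivial.

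The step I expect to be the main obstacle is precisely the one that motivates the event $E$: Proposition~\ref{prop:volest} controls only convex hulls of points \emph{inside} the ball, whereas the conditioning in SmAC allows the remaining points to sit anywhere, in which case $\operatorname{conv}\{\boldsymbol{y}_j\}$ can be enormous and even engulf the whole ball, destroying the volume bound. The fix is to absorb, once and for all, the exponentially small probability ${\mathbb P}(E^c)\le MCB^n$ that any point escapes the ball, so that both the volume estimate and the SmAC concentration bound can be used on the good event. Tracking the two separate appearances of $CB^n$ — one from the escape probability and one from the concentration estimate — is what produces the factor $2$, and hence the $\delta/2C$ in the definition of $a$.
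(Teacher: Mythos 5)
Your proposal is correct and follows essentially the same route as the paper: a union bound over the events $\boldsymbol{x}_i \in \operatorname{conv}\{\boldsymbol{x}_j : j\neq i\}$, restriction to the event that all points lie in ${\mathbb{B}}_n$ (costing one factor of $MCB^n$), the volume bound $V(n,b^n)<(3\sqrt{\log b})^n\le A^n$ from Proposition~\ref{prop:volest} to make SmAC applicable (costing the second $MCB^n$), and the choice of $a,b$ to absorb $2MCB^n\le\delta$ and to dispose of small $n$. Your additive decomposition via the event $E$, with SmAC applied pointwise in the conditioning and then integrated, is if anything slightly cleaner in the bookkeeping than the paper's conditional-probability product $(1-MCB^n)^2$, but it is the same argument.
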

\begin{proof}
If $n<N(b)$, then $M \leq a b^n \leq b^{-N(b)} b^n < 1$, a contradiction. Let $n \geq N(b)$, and let $F=\{\boldsymbol{ x}_1, \boldsymbol{x}_2, \dots, \boldsymbol{ x}_M\}$. Then
\begin{equation*}
\begin{split}
{\mathbb P}(F \subset {\mathbb{B}}_n) &\geq 1 - \sum_{i=1}^M {\mathbb P}(\boldsymbol{x}_i \not \in {\mathbb{B}}_n)\\ &\geq 1 - \sum_{i=1}^M CB^n = 1-MCB^n,
\end{split}
\end{equation*}
where the second inequality follows from \eqref{eq:condstar}.
Next,
\begin{equation*}
\begin{split}
{\mathbb P}(F \, &\text{is linearly separable}\,|\,F \subset {\mathbb{B}}_n)\\ &\geq 1-\sum_{i=1}^M {\mathbb P}(\boldsymbol{ x}_i \in \text{conv}(F\setminus \{\boldsymbol{ x}_i\})\,|\,F \subset {\mathbb{B}}_n).
\end{split}
\end{equation*}
%
For set $S=\text{conv}(F\setminus \{\boldsymbol{ x}_i\})$
\begin{equation*}
\begin{split}
\frac{V_n(S)}{V_n({\mathbb{B}}_n)}& \leq  V(n,M-1) \leq V(n,b^n)  \\& < \left(3\sqrt{\log (b)}\right)^n  \leq A^n,
\end{split}
\end{equation*}
where we have used \eqref{eq:volest} and inequalities $a \leq 1$,  $b \leq \exp((A/3)^2)$. Then SmAC condition implies that
\begin{equation*}
\begin{split}
{\mathbb P}(\boldsymbol{ x}_i &\in \text{conv}(F\setminus \{\boldsymbol{ x}_i\})\,|\,F \subset {\mathbb{B}}_n)\\
& = {\mathbb P}(\boldsymbol{ x}_i \in S\,|\,F \subset {\mathbb{B}}_n) \leq CB^n.
\end{split}
\end{equation*}
Hence,
$$
{\mathbb P}(F \, \text{is linearly separable}\,|\,F \subset {\mathbb{B}}_n) \geq 1 - MCB^n,
$$
and
\begin{equation*}
\begin{split}
{\mathbb P}(F \,& \text{is linearly separable}) \geq (1 - MCB^n)^2\\ & \geq 1-2MCB^n \geq 1-2ab^{n}CB^n \geq 1-\delta,
\end{split}
\end{equation*}
where the last inequality follows from $a \leq \delta/2C$,  $b\leq 1/B$.
\end{proof}

\subsection{Stochastic separation by Fisher's linear discriminant}

According to the  general stochastic separation theorems there {\em exist}  linear functionals, which  separate points in a random set (with high probability and under some conditions). Such a linear functional can be found by various iterative methods, from the Rosenblatt perceptron learning rule to support vector machines.  This existence is nice but for applications we need the non-iterative learning. It would be very desirable to have an explicit expression for separating functionals.

 There exists a general scheme for creation of linear discriminants {\cite{Tyukin2017a,GorTyukPhil2018}}. For separation of single  points from a data cloud it is necessary:
 \begin{enumerate}
 \item Centralise the cloud (subtract the mean point from all data vectors).
 \item Escape strong multicollinearity, for example, by principal component analysis and deleting minor components, which correspond to the small eigenvalues of empiric covariance matrix.
\item Perform whitening (or spheric transformation), that is a linear transformation, after that the covariance matrix becomes a unit matrix. In principal components whitening is simply the normalisation of coordinates to unit variance.
\item The linear inequality for separation of a point $\boldsymbol{ x}$ from the cloud $Y $ in new coordinates is
\begin{equation}\label{discriminant}
(\boldsymbol{x},\boldsymbol{y})\leq \alpha (\boldsymbol{x},\boldsymbol{x}), \mbox{  for all  } \boldsymbol{y}\in Y.
\end{equation}
 where $\alpha\in (0,1)$ is a threshold, and $(\bullet,\bullet)$ is the standard Euclidean inner product in new coordinates.
\end{enumerate}

In real applied problems, it could be difficult to perform the precise whitening but a rough approximation to this transformation could also create useful discriminants (\ref{discriminant}). We will call `Fisher's discriminants' all the discriminants created non-iteratively by inner products (\ref{discriminant}), with some extension of  meaning.

Formally, we say that finite set $F \subset {\mathbb R}^n$ is \emph{Fisher-separable} if
\begin{equation}\label{eq:Fisher}
(\boldsymbol{ x},\boldsymbol{ x}) > (\boldsymbol{ x},\boldsymbol{ y}), 
\end{equation}
holds for all $\boldsymbol{ x}, \boldsymbol{ y} \in F$ such that $\boldsymbol{ x}\neq  \boldsymbol{ y}$.

Two following theorems demonstrate that Fisher's  discriminants are powerful in high dimensions.

\begin{theorem}[Equidistribution in $\mathbb{B}_n$ \cite{GorTyuRom2016,GorbTyu2017}]\label{ball1point}
Let $\{\boldsymbol{x}_1, \ldots , \boldsymbol{x}_M\}$ be a set of $M$  i.i.d. random points  from the equidustribution in the unit ball $\mathbb{B}_n$. Let $0<r<1$, and $\rho=\sqrt{1-r^2}$. Then
\begin{equation}
\begin{split}\label{Eq:ball1}
&\mathbf{P}\left(\|\boldsymbol{x}_M\|>r \mbox{ and }
\left(\boldsymbol{x}_i,\frac{\boldsymbol{x}_M}{\| \boldsymbol{x}_M\| } \right)
<r \mbox{ for all } i\neq M \right) \\& \geq 1-r^n-0.5(M-1) \rho^{n};
\end{split}
\end{equation}
\begin{equation}
\begin{split}\label{Eq:ballM}
&\mathbf{P}\left(\|\boldsymbol{x}_j\|>r  \mbox{ and } \left(\boldsymbol{x}_i,\frac{\boldsymbol{x}_j}{\| \boldsymbol{x}_j\|}\right)<r \mbox{ for all } i,j, \, i\neq j\right) \\& \geq  1-Mr^n-0.5M(M-1)\rho^{n};
\end{split}
\end{equation}
\begin{equation}
\begin{split}\label{Eq:ballMangle}
&\mathbf{P}\left(\|\boldsymbol{x}_j\|>r  \mbox{ and } \left(\frac{\boldsymbol{x}_i}{\| \boldsymbol{x}_i\|},\frac{\boldsymbol{x}_j}{\| \boldsymbol{x}_j\|}\right)<r \mbox{ for all } i,j, \,i\neq j\right)\\&  \geq  1-Mr^n-M(M-1)\rho^{n}.
\end{split}
\end{equation}
\end{theorem}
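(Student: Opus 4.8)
The plan is to treat each of the three inequalities as a union of two kinds of ``bad'' events---a \emph{radial} failure $\|\boldsymbol{x}_j\|\le r$ for some index, and an \emph{angular} failure in which some inner product exceeds $r$---and to control each kind by an elementary geometric estimate, assembling everything with a union bound after conditioning on the point that defines the separating direction.

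Two estimates do all the work. First, a radial estimate: since each $\boldsymbol{x}_i$ is uniform in $\mathbb{B}_n$, scaling of Lebesgue measure gives $\mathbf{P}(\|\boldsymbol{x}_i\|\le r)=V_n(r\mathbb{B}_n)/V_n(\mathbb{B}_n)=r^n$. Second, cap estimates for a fixed unit vector $\boldsymbol{v}$. For the \emph{ball cap} I would show
\[
\mathbf{P}\big((\boldsymbol{x}_i,\boldsymbol{v})\ge r\big)\le \tfrac12\rho^n
\]
by a containment argument: if $\|\boldsymbol{x}\|\le 1$ and $(\boldsymbol{x},\boldsymbol{v})\ge r$ then $\|\boldsymbol{x}-r\boldsymbol{v}\|^2=\|\boldsymbol{x}\|^2-2r(\boldsymbol{x},\boldsymbol{v})+r^2\le 1-r^2=\rho^2$, so the cap lies inside the ball of radius $\rho$ centred at $r\boldsymbol{v}$, and moreover inside the half of that ball cut off by the hyperplane $(\boldsymbol{x}-r\boldsymbol{v},\boldsymbol{v})=0$; hence its volume is at most $\tfrac12\rho^n V_n(\mathbb{B}_n)$. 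For the \emph{sphere cap}, needed in the third inequality, the direction $\boldsymbol{x}_i/\|\boldsymbol{x}_i\|$ is uniform on $S^{n-1}$, so
\[
\mathbf{P}\big((\boldsymbol{x}_i/\|\boldsymbol{x}_i\|,\boldsymbol{v})\ge r\big)=\frac{\int_r^1(1-t^2)^{(n-3)/2}\,dt}{\int_{-1}^1(1-t^2)^{(n-3)/2}\,dt},
\]
and I would bound the numerator by inserting the factor $t/r\ge 1$ and integrating explicitly to get $\rho^{n-1}/(r(n-1))$, while the denominator is the Beta value $\sqrt{\pi}\,\Gamma(\tfrac{n-1}{2})/\Gamma(\tfrac n2)\asymp\sqrt{2\pi/n}$; this yields a bound $\le\rho^n$ in the range of $n$ for which the statement is non-trivial.

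With these in hand, I would assemble each inequality by conditioning on the point that supplies the separating direction, say $\boldsymbol{x}_j$: once $\boldsymbol{x}_j$ is fixed, $\boldsymbol{v}=\boldsymbol{x}_j/\|\boldsymbol{x}_j\|$ is a fixed unit vector and the remaining points are independent and uniform, so each cap estimate applies conditionally. For the first inequality there is one radial condition on $\boldsymbol{x}_M$ (failure $\le r^n$) and $M-1$ ball-cap conditions (each failure $\le\tfrac12\rho^n$), giving $1-r^n-\tfrac12(M-1)\rho^n$. For the second there are $M$ radial conditions and the $M(M-1)$ ordered ball-cap conditions, giving $1-Mr^n-\tfrac12 M(M-1)\rho^n$. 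For the third the $M(M-1)$ ordered inner products are now \emph{normalised}, so each is controlled by the sphere-cap bound $\rho^n$ in place of $\tfrac12\rho^n$, producing the coefficient $M(M-1)$; the extra factor of two over the second inequality is exactly the difference between the ball-cap and sphere-cap constants.

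The main obstacle is the sphere-cap estimate. The ball cap enjoys the clean half-ball containment argument, but the normalised inner product forces a genuine computation of the spherical measure, and obtaining a bound of the form $\rho^n$ (rather than the easier $\rho^{n-1}$) requires the $\Gamma$-function asymptotics of the normalising constant; this is also the only place where the statement is effectively restricted to the regime in which the right-hand side is a meaningful probability. Beyond that, the one subtlety to handle carefully is the conditioning itself: the separating direction depends on the data, so the cap estimates may be invoked only after fixing the pivot point $\boldsymbol{x}_j$ and using independence of the remaining points.
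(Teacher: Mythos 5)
The paper does not actually prove Theorem~\ref{ball1point} here; it imports it from \cite{GorTyuRom2016,GorbTyu2017} and only remarks that the three inequalities are ``closely related'' to the excluded--volume fact $p_y<1/2^n$. Your argument is the standard one behind those references: a union bound over one radial failure per point and one cap failure per ordered pair, with the cap probability computed after conditioning on the pivot $\boldsymbol{x}_j$ so that the separating direction is fixed and independence of the remaining points applies. Your radial estimate, your half-ball containment bound $\mathbf{P}((\boldsymbol{x},\boldsymbol{v})\ge r)\le\tfrac12\rho^n$, and the resulting constants $1-r^n-0.5(M-1)\rho^n$ and $1-Mr^n-0.5M(M-1)\rho^n$ for \eqref{Eq:ball1} and \eqref{Eq:ballM} are all correct.

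The one place you should not leave as an aside is the sphere-cap bound needed for \eqref{Eq:ballMangle}. Your integral estimate gives
$$
\mathbf{P}\left(\left(\tfrac{\boldsymbol{x}_i}{\|\boldsymbol{x}_i\|},\boldsymbol{v}\right)\ge r\right)\le\frac{\rho^{n-1}}{r(n-1)}\cdot\frac{\Gamma(n/2)}{\sqrt{\pi}\,\Gamma((n-1)/2)}\le\frac{\rho^{n-1}}{r}\cdot\frac{1}{\sqrt{2\pi(n-1)}},
$$
and the desired conclusion $\le\rho^n$ is equivalent to $r\rho\ge 1/\sqrt{2\pi(n-1)}$, which is genuinely false for some admissible $(n,r)$ (e.g.\ $n=2$, $r$ near $1$), so the claim does need the non-triviality restriction you invoke. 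To close it, make the restriction quantitative: if the right-hand side of \eqref{Eq:ballMangle} is nonnegative and $M\ge 2$, then $r^n<1/2$ and $\rho^n<1/2$, hence $r^2>1-2^{-2/n}\ge\frac{2\ln 2}{n}(1-\frac{\ln 2}{n})$ and likewise for $\rho^2$, so both $r$ and $\rho$ exceed roughly $1.09/\sqrt{n}$; since $r^2+\rho^2=1$ forces one of them to be at least $1/\sqrt{2}$, you get $r\rho\ge 0.77/\sqrt{n}>1/\sqrt{2\pi(n-1)}$ for all $n\ge 2$, and the bound $\le\rho^n$ follows (when the right-hand side is negative the inequality is vacuous). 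With that lemma in place your ordered-pair union bound delivers exactly the coefficient $M(M-1)\rho^n$, and the factor of two relative to \eqref{Eq:ballM} is, as you say, precisely the loss of the half-ball trick when the inner products are normalised.
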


According to Theorem \ref{ball1point}, the probability that a single element $\bfx_M$ from the sample $\mathcal{S}=\{\bfx_1,\dots,\bfx_{M}\}$ is linearly separated from the set $\mathcal{S}\setminus \{\bfx_M\}$ by the hyperplane $l(x)=r$ is at least
\[
1-r^n-0.5(M-1)\left(1-r^2\right)^{\frac{n}{2}}.
\]
This probability estimate depends on both $M=|\mathcal{S}|$ and dimensionality $n$.  An interesting consequence of the theorem is that if one picks a probability value, say $1-\vartheta$, then the maximal possible values of $M$ for which the set $\mathcal{S}$ remains linearly separable with  probability that is no less than $1-\vartheta$ grows at least exponentially with $n$. In particular, the following holds

 \begin{corollary}\label{cor:exponential}
 Let $\{\boldsymbol{x}_1, \ldots , \boldsymbol{x}_M\}$ be a set of $M$ i.i.d. random points  from the equidustribution in the unit ball $\mathbb{B}_n$. Let $0<r,\vartheta<1$, and $\rho=\sqrt{1-r^2}$. If
\begin{equation}\label{EstimateMball}
M<2({\vartheta-r^n})/{\rho^{n}},
 \end{equation}
 then
 $
 \mathbf{P}((\boldsymbol{x}_i,\boldsymbol{x}_M{)}<r\|\boldsymbol{x}_M\| \mbox{ for all } i=1,\ldots, M-1)>1-\vartheta.
$
 If
 \begin{equation}\label{EstimateM2ball}
M<({r}/{\rho})^n\left(-1+\sqrt{1+{2 \vartheta \rho^n}/{r^{2n}}}\right),
 \end{equation}
  then $\mathbf{P}((\boldsymbol{x}_i,\boldsymbol{x}_j)<r\|\boldsymbol{x}_i\| \mbox{ for all } i,j=1,\ldots, M, \, i\neq j)\geq 1-\vartheta.$

  In particular, if inequality (\ref{EstimateM2ball}) holds then the set $\{\boldsymbol{x}_1, \ldots , \boldsymbol{x}_M\}$ is  Fisher-separable  with probability $p>1-\vartheta$.
 \end{corollary}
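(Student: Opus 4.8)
The plan is to derive all three claims directly from Theorem \ref{ball1point}, reading off the bounds \eqref{EstimateMball} and \eqref{EstimateM2ball} by inverting the probability estimates \eqref{Eq:ball1} and \eqref{Eq:ballM}. For the first claim I would observe that, since $\|\boldsymbol{x}_M\|>0$ almost surely, the event $\{(\boldsymbol{x}_i,\boldsymbol{x}_M)<r\|\boldsymbol{x}_M\|\text{ for all }i<M\}$ coincides with $\{(\boldsymbol{x}_i,\boldsymbol{x}_M/\|\boldsymbol{x}_M\|)<r\}$ and hence \emph{contains} the event appearing in \eqref{Eq:ball1}. Its probability is therefore at least $1-r^n-0.5(M-1)\rho^n$, and it remains only to check that \eqref{EstimateMball} forces $r^n+0.5(M-1)\rho^n<\vartheta$: indeed $M<2(\vartheta-r^n)/\rho^n$ gives $0.5(M-1)\rho^n<0.5M\rho^n<\vartheta-r^n$, yielding the claimed probability $>1-\vartheta$.

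For the second claim I would similarly discard the norm constraint in \eqref{Eq:ballM}, so that the separation event $\{(\boldsymbol{x}_i,\boldsymbol{x}_j)<r\|\boldsymbol{x}_i\|\text{ for all }i\neq j\}$ has probability at least $1-Mr^n-0.5M(M-1)\rho^n$ (the condition is symmetric in the ordered pair, so the bound on $\|\boldsymbol{x}_j\|$ in \eqref{Eq:ballM} equally controls $(\boldsymbol{x}_i,\boldsymbol{x}_j)$ in terms of $\|\boldsymbol{x}_i\|$). The task is then to translate the requirement $Mr^n+0.5M(M-1)\rho^n\le\vartheta$ into the explicit form \eqref{EstimateM2ball}. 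The cleanest route is to use $M(M-1)<M^2$ and solve the sufficient quadratic inequality $0.5\rho^n M^2+r^n M-\vartheta\le 0$; its positive root is exactly $(r/\rho)^n\bigl(-1+\sqrt{1+2\vartheta\rho^n/r^{2n}}\bigr)$, so every $M$ below this root satisfies the estimate, giving probability $>1-\vartheta$.

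The third claim is where the discarded information matters, and this is the step I expect to be the main (if small) obstacle. Fisher-separability as defined in \eqref{eq:Fisher} requires $(\boldsymbol{x}_i,\boldsymbol{x}_i)>(\boldsymbol{x}_i,\boldsymbol{x}_j)$ for $i\neq j$, and the bare separation inequality $(\boldsymbol{x}_i,\boldsymbol{x}_j)<r\|\boldsymbol{x}_i\|$ is not by itself enough, since $r\|\boldsymbol{x}_i\|$ need not be below $\|\boldsymbol{x}_i\|^2$. The key observation is that one must \emph{retain} the norm lower bound $\|\boldsymbol{x}_j\|>r$ from \eqref{Eq:ballM} that was dropped in the first two parts. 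On the full event of \eqref{Eq:ballM} one has simultaneously $\|\boldsymbol{x}_i\|>r$ and, by symmetry of the inner product applied to the pair $(j,i)$, $(\boldsymbol{x}_i,\boldsymbol{x}_j)<r\|\boldsymbol{x}_i\|<\|\boldsymbol{x}_i\|^2=(\boldsymbol{x}_i,\boldsymbol{x}_i)$. Thus this full event is contained in the Fisher-separability event, and since its probability already exceeds $1-\vartheta$ under \eqref{EstimateM2ball} by the computation in the second claim, the set is Fisher-separable with probability $p>1-\vartheta$, completing the proof.
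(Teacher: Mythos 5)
Your proof is correct and follows exactly the route the paper intends (the corollary is stated without proof as a direct consequence of Theorem~\ref{ball1point}): drop or retain the norm condition as appropriate, use symmetry of the inner product to pass from $(\boldsymbol{x}_i,\boldsymbol{x}_j)<r\|\boldsymbol{x}_j\|$ to $(\boldsymbol{x}_i,\boldsymbol{x}_j)<r\|\boldsymbol{x}_i\|$, and invert the probability bounds for $M$, with \eqref{EstimateM2ball} arising as the positive root of the quadratic $0.5\rho^nM^2+r^nM-\vartheta=0$. Your observation that the lower bound $\|\boldsymbol{x}_j\|>r$ must be kept for the Fisher-separability claim (since $r\|\boldsymbol{x}_i\|<\|\boldsymbol{x}_i\|^2$ requires $\|\boldsymbol{x}_i\|>r$) is exactly the point that makes the last assertion work.
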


{Note that (\ref{Eq:ballMangle}) implies that elements of the set $\{\boldsymbol{x}_1, \ldots , \boldsymbol{x}_M\}$ are pair-wise almost or $\varepsilon$-orthogonal, i.e. $|\cos(\bfx_i,\bfx_j)|\leq \varepsilon$ for all $i\neq j$, $1\leq i,j\leq M$,  with probability larger or equal than  $1-2Mr^n-2M(M-1)\rho^{n}$. Similar to Corollary \ref{cor:exponential}, one  can conclude that the cardinality $M$ of samples with such properties grows at least exponentially with $n$. Existence of the phenomenon has been demonstrated in \cite{Kurkova1993}. Theorem \ref{ball1point},  Eq. (\ref{Eq:ballMangle}), shows that the phenomenon is typical in some sense (cf. \cite{bases}, \cite{Kurkova:2017}).}

The linear separability property of finite but exponentially large samples of random i.i.d. elements is not restricted to equidistributions in $\mathbb{B}_n$. As has been noted in \cite{GorbanRomBurtTyu2016}, it holds for equidistributions in ellipsoids as well as for the Gaussian distributions. Moreover, it can be generalized to product distributions in a unit cube. Consider, e.g. {the case}  when coordinates of the vectors $\bfx=(X_1,\dots,X_n)$ in the set $\mathcal{S}$ are independent random variables $X_i$, $i=1,\dots,n$ with expectations $\overline{X}_i$ and variances $\sigma_i^2>\sigma_0^2>0$. Let $0\leq X_i\leq 1$ for all $i=1,\dots,n$. The following analogue of Theorem \ref{ball1point} can now be stated.

\begin{theorem}[Product distribution in a cube \cite{GorbTyu2017}]\label{cube} Let  $\{\boldsymbol{x}_1, \ldots , \boldsymbol{x}_M\}$ be i.i.d. random points from the product distribution in a unit cube. Let
\[
R_0^2=\sum_i \sigma_i^2\geq n\sigma_0^2.
\]
Assume that data are centralised and  $0< \delta <2/3$. Then
\begin{equation}\label{Eq:cube1}
\begin{split}
&\mathbf{P}\left(1-\delta  \leq \frac{\|\boldsymbol{x}_j\|^2}{R^2_0}\leq 1+\delta \mbox{ and }
\frac{(\boldsymbol{x}_i,\boldsymbol{x}_M)}{R_0\| \boldsymbol{x}_M\| }<\sqrt{1-\delta} \right. \\   &\quad \mbox{ for all } i,j, \, i\neq M {\bigg)} \geq 1- 2M\exp \left(-2\delta^2 R_0^4/n \right) \\ &\quad -(M-1)\exp \left(-2R_0^4(2-3 \delta)^2/n\right);
\end{split}
\end{equation}
\begin{equation}\label{Eq:cube2}
\begin{split}
&\mathbf{P}\left(1-\delta  \leq \frac{\|\boldsymbol{x}_j\|^2}{R^2_0}\leq 1+\delta \mbox{ and }
\frac{(\boldsymbol{x}_i,\boldsymbol{x}_j)}{R_0\| \boldsymbol{x}_j\| }<\sqrt{1-\delta} \right. \\ &\quad \mbox{ for all } i,j, \, i\neq j {\bigg)} \geq 1- 2M\exp \left(-2\delta^2 R_0^4/n \right) \\ & \quad -M(M-1)\exp \left(-2R_0^4(2-3 \delta)^2/n\right).
\end{split}
\end{equation}
\end{theorem}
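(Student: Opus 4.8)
The plan is to derive both \eqref{Eq:cube1} and \eqref{Eq:cube2} from Hoeffding's inequality applied to sums of independent, bounded functions of the coordinates, together with a union bound; the two displayed estimates will differ only in how many index pairs enter that union bound. Since the data are centralised, I write $\boldsymbol{x}_i=(X_{i,1},\dots,X_{i,n})$, where for each coordinate $k$ the variables $X_{i,k}$ are independent across both $i$ and $k$, have mean zero, variance $\sigma_k^2$, and lie in an interval of length at most $1$ (the centred image of $[0,1]$). The two governing identities are $\mathbf{E}\|\boldsymbol{x}_j\|^2=\sum_k\sigma_k^2=R_0^2$ and $\mathbf{E}(\boldsymbol{x}_i,\boldsymbol{x}_j)=0$ for $i\neq j$. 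The first term in each bound will come from the concentration of the squared norm, and the second from the concentration of the inner product.

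For the norm, observe that $\|\boldsymbol{x}_j\|^2=\sum_{k=1}^n X_{j,k}^2$ is a sum of $n$ independent terms, each lying in an interval of length at most $1$, with total mean $R_0^2$. Hoeffding's inequality then gives $\mathbf{P}\left(\bigl|\,\|\boldsymbol{x}_j\|^2-R_0^2\bigr|\ge \delta R_0^2\right)\le 2\exp(-2\delta^2R_0^4/n)$, i.e. the event $1-\delta\le\|\boldsymbol{x}_j\|^2/R_0^2\le 1+\delta$ fails with probability at most $2\exp(-2\delta^2R_0^4/n)$. A union bound over the $M$ points produces exactly the term $2M\exp(-2\delta^2R_0^4/n)$ common to both estimates.

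For the inner product I would first condition on the norm of the denominator point being in range. If $\|\boldsymbol{x}_j\|^2\ge(1-\delta)R_0^2$, then the bad event $(\boldsymbol{x}_i,\boldsymbol{x}_j)\ge\sqrt{1-\delta}\,R_0\|\boldsymbol{x}_j\|$ implies $(\boldsymbol{x}_i,\boldsymbol{x}_j)\ge(1-\delta)R_0^2$. The key step is to pass to the auxiliary sum $S=\|\boldsymbol{x}_i-\boldsymbol{x}_j\|^2-\|\boldsymbol{x}_i\|^2=\sum_{k=1}^n X_{j,k}(X_{j,k}-2X_{i,k})$, whose summands are again independent across $k$ and bounded, and whose mean is $\sum_k\sigma_k^2=R_0^2$. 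On the norm-good event $\|\boldsymbol{x}_j\|^2\le(1+\delta)R_0^2$, the bad inner-product event forces $S=\|\boldsymbol{x}_j\|^2-2(\boldsymbol{x}_i,\boldsymbol{x}_j)\le(1+\delta)R_0^2-2(1-\delta)R_0^2=(3\delta-1)R_0^2$, i.e. $S$ falls below its mean by at least $(2-3\delta)R_0^2$; note $2-3\delta>0$ is exactly where the hypothesis $\delta<2/3$ is used. Hoeffding's inequality (lower tail) then bounds the probability of this deviation by $\exp(-2(2-3\delta)^2R_0^4/n)$. Summing over the relevant pairs — the $M-1$ pairs $(i,M)$ for \eqref{Eq:cube1}, and all $M(M-1)$ ordered pairs $(i,j)$, $i\neq j$, for \eqref{Eq:cube2} — yields the second term in each estimate, and combining with the norm failure probability by a final union bound completes the argument.

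The main obstacle I anticipate is the bookkeeping around the passage from the scale-invariant condition $(\boldsymbol{x}_i,\boldsymbol{x}_j)/(R_0\|\boldsymbol{x}_j\|)<\sqrt{1-\delta}$, in which the random norm $\|\boldsymbol{x}_j\|$ appears in the denominator, to a deviation bound for a sum against a fixed threshold: one must intersect the angle event with the norm-in-range event for the appropriate point and argue the implications in the correct direction, so that the union bound over pairs stays valid and no event is double-counted. A secondary technical point is controlling the range of the quadratic summands $X_{j,k}(X_{j,k}-2X_{i,k})$ tightly enough (using $0\le X_{i,k},X_{j,k}\le 1$ together with centring) to land the constant $2$ and the denominator $n$ in the exponent exactly as stated; a careless range estimate loses a constant factor here, so this is where the advertised sharpness must be earned.
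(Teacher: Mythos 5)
Your overall strategy --- Hoeffding's inequality for the squared norms, an auxiliary sum of independent bounded summands controlling each inner product, and a union bound over points and over pairs --- is exactly the method behind this theorem: the paper does not reprove it here but attributes it to concentration inequalities in product spaces and to the reduction ``norms in a thin shell $\Rightarrow$ the separation inequality can fail only inside a small ball,'' which is precisely your argument. The norm estimate, the reduction of the scale-invariant condition to $(\boldsymbol{x}_i,\boldsymbol{x}_j)\ge(1-\delta)R_0^2$ via the lower norm bound and then to $S\le(3\delta-1)R_0^2$ via the upper norm bound, the role of $\delta<2/3$, and the count of $M-1$ versus $M(M-1)$ pairs are all correct.

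The genuine gap is the one you flagged but did not close: the Hoeffding step for $S=\sum_k X_{j,k}(X_{j,k}-2X_{i,k})$ needs each summand to lie in an interval of length $1$, and this is false in general. Writing the centred coordinate as ranging over $[-c_k,\,1-c_k]$ with $c_k=\overline{X}_k$, the summand $X_{j,k}^2-2X_{i,k}X_{j,k}$ ranges over an interval of length $1+|2c_k-1|$, which equals $1$ only when $c_k=1/2$ and tends to $2$ when the mean is near an endpoint of $[0,1]$. Hence your application of Hoeffding only yields $\exp\bigl(-2(2-3\delta)^2R_0^4/\sum_k(1+|2c_k-1|)^2\bigr)$, in the worst case $\exp\bigl(-(2-3\delta)^2R_0^4/(2n)\bigr)$, which is strictly weaker than the stated bound; no sharper range estimate for this particular $S$ can repair it. (By contrast, your norm step is fine because $X_{j,k}^2$ does have range at most $1$.) One clean way to land the stated constant is to apply Hoeffding to $(\boldsymbol{x}_i,\boldsymbol{x}_j)=\sum_k X_{i,k}X_{j,k}$ \emph{conditionally} on $\boldsymbol{x}_j$: the conditional summands have mean $0$ and range $|X_{j,k}|$, so $\mathbf{P}\bigl((\boldsymbol{x}_i,\boldsymbol{x}_j)\ge\sqrt{1-\delta}\,R_0\|\boldsymbol{x}_j\|\;\big|\;\boldsymbol{x}_j\bigr)\le\exp\bigl(-2(1-\delta)R_0^2\bigr)$, and since $R_0^2=\sum_k\sigma_k^2\le n/4$ one checks that $2(1-\delta)R_0^2\ge 2(2-3\delta)^2R_0^4/n$ for all $\delta\le 8/9$, which covers the admissible range $\delta<2/3$ and gives the second exponential exactly as stated.
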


In particular, under the conditions of Theorem \ref{cube}, set $\{\boldsymbol{x}_1, \ldots , \boldsymbol{x}_M\}$ is Fisher-separable  with probability $p>1-\vartheta$, provided that $M \leq ab^n$, where $a>0$ and $b>1$ are some constants depending only on $\vartheta$ and $\sigma_0$.

The proof  of Theorem \ref{cube} is based on concentration inequalities in product spaces \cite{Talagrand1995}. Numerous generalisations of Theorems
\ref{ball1point}, \ref{cube} are possible for different classes of distributions, for example, for weakly dependent variables, etc.

We can see from Theorem \ref{cube} that the discriminant (\ref{discriminant}) works without precise whitening. Just the absence of strong degeneration is required: the support of the distribution contains in the unit cube (that is bounding from above) and, at the same time, the variance of each coordinate is bounded from below by $\sigma_0>0$.

Linear separability, as an inherent property of data sets in high dimension, is not necessarily confined to cases whereby a linear functional separates a single element of a set from the rest.  Theorems \ref{ball1point}, \ref{cube} be generalized to account for $m$-tuples, $m>1$ too \cite{Tyukin2017a,GorTyukPhil2018}.

\begin{figure}
\centering
\includegraphics[width=0.5\columnwidth]{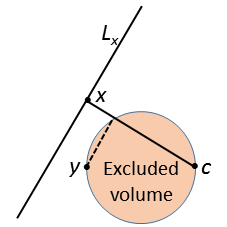}
\caption {Point $x$ should not belong to the filled ball (excluded volume) to be separable from $y$ by the linear discriminant (\ref{discriminant}). Here, $c$ is the centre of data cloud, and $L_x$ is the hyperplane such that $(x,z)=(x,x)$ for $z \in L_x$.}
\label{Fig:Excluded}
\end{figure}

Let us make several remarks for the general distributions. For each data point $y$ the probability that  a randomly chosen point $x$ is {\em not} separated from $y$ by the discriminant (\ref{eq:Fisher}) (i.e. that the inequality (\ref{eq:Fisher}) is false) is (Fig.~\ref{Fig:Excluded})
\begin{equation}\label{excluded}
p=p_y=\int_{\left\|z-\frac{y}{2}\right\|\leq \frac{\|y\|}{2}} \rho(z) dz ,
\end{equation}
where $\rho(z) dz$ is the probability measure.
We need to evaluate the probability of  finding a random point outside the union of $N$ such excluded volumes. For example,  for the equidistribution in a ball $\mathbb{B}_n$, $p_y <1/2^n$ for all $y$ from the ball. The probability to select a point  {inside} the union of $N$ `forbidden balls'  is less than $N/2^n$ for any position of $N$ points $y$ in $\mathbb{B}_n$. Inequalities (\ref{Eq:ball1}), (\ref{Eq:ballM}), and (\ref{Eq:ballMangle}) are closely related to  that fact.

Instead of equidistribution in a ball $\mathbb{B}_n$, we can take probability distributions  with bounded density $\rho$ in a ball  $\mathbb{B}_n$
\begin{equation}\label{bounded}
\rho(y)<\frac{C}{r^n V_n(\mathbb{B}_n)},
 \end{equation}
 where $C>0$ is an arbitrary constant, $V_n(\mathbb{B}_n)$ is the volume of the ball, and radius $r>1/2$. This inequality guarantees that the probability of each ball with radius less or equal than 1/2 exponentially decays for $n\to \infty$. It should be stressed that in asymptotic analysis for large $n$ the constant $C>0$ is arbitrary but  does not  {depend on $n$.} 

For the bounded distributions (\ref{bounded}) the separability by linear discriminants (\ref{discriminant}) is similar to separability for the equidistributions. The proof through the estimation of the probability to avoid the excluded volume is straightforward.

For the practical needs, the number  {of} points $N$ is large. We have to evaluate the sum $p_y$ for $N$ points $y$.  {For} most estimates, evaluation of the expectations and variances  of  $p_y$ (\ref{excluded}) will be sufficient, if points $y$ are independently chosen.

If the distribution is unknown and exists just in the form of the sample of empirical points, we can evaluate ${\bf E}(p)$ and $var( p)$  (\ref{excluded}) from the sample directly, without knowledge of theoretical probabilities.

Bound \eqref{bounded} is the special case of \eqref{eq:indden} with $r>1/2$, and it is  more restrictive: in Examples \ref{ex:noisy}, \ref{ex:cube}, and \ref{ex:product}, the distributions satisfy  \eqref{eq:indden} with some $r<1$, but fail to satisfy \eqref{bounded}  {if $r<1/2$}. Such distributions has the SmAC property and the corresponding set of points $\{\boldsymbol{x}_1, \ldots , \boldsymbol{x}_M\}$ is linearly separable by Theorem \ref{th:separation}, but different technique is needed to establish its Fisher-separability.  {One option is to} estimate the distribution of $p$  {in} (\ref{excluded}).
Another technique is based on concentration inequalities. For some distributions, one can prove that, with exponentially high probability, random point $\boldsymbol{x}$ satisfies
\begin{equation}\label{eq:thinshell}
r_1(n) \leq \|\boldsymbol{x}\| \leq r_2(n),
\end{equation}
where $\|\bullet\|$ denotes the Euclidean norm in ${\mathbb R}^n$, and $r_1(n)$ and $r_2(n)$ are some lower and upper bounds, depending on $n$. If $r_2(n)-r_1(n)$ is small comparing to $r_1(n)$, it means that the distribution is concentrated in a thin shell between the two spheres. If $\boldsymbol{x}$ and $\boldsymbol{y}$ satisfy \eqref{eq:thinshell}, inequality (\ref{eq:Fisher}) may fail only if $\boldsymbol{y}$ belongs to a ball with radius $R=\sqrt{r^2_2(n)-r^2_1(n)}$. If $R$ is much lower than $r_1(n)/2$, this method may provide much better probability estimate than \eqref{excluded}. This is how Theorem \ref{cube} was proved in \cite{GorbTyu2017}.

\section{Separation theorem for log-concave distributions}

\subsection{Log-concave distributions}

In \cite{GorbTyu2017} we proposed several possible generalisations of Theorems~\ref{ball1point}, \ref{cube}. One of them is the hypothesis that for the uniformly log-concave distributions the similar result can be formulated and proved. Below we demonstrate that this hypothesis is true, formulate and prove the stochastic separation theorems for several classes of log-concave distributions. Additionally, we prove the comparison (domination) Theorem~\ref{prop:domin} that allows to extend the proven theorems to wider classes of distributions. 

In this subsection, we introduce several classes of log-concave distributions and prove some useful properties of these distributions.

Let ${\cal P}=\{{\mathbb P}_n, \, n=1,2,\dots\}$ be a family of probability measures with densities $\rho_n:{\mathbb R}^n \to [0,\infty), \, n=1,2,\dots$. Below, $\boldsymbol{x}$ is a random variable (r.v) with density $\rho_n$, and ${\mathbb E}_n[f(\boldsymbol{x})]:=\int_{{\mathbb R}^n} f(z) \rho_n(z) dz$ is the expectation of $f(\boldsymbol{x})$. 

We say that density $\rho_n:{\mathbb R}^n \to [0,\infty)$ (and the corresponding probability measure ${\mathbb P}_n$): 
\begin{itemize}
\item is whitened, or \emph{isotropic}, if ${\mathbb E}_n[\boldsymbol{x}]=0$, and
\begin{equation}\label{eq:isot}
{\mathbb E}_n[(\boldsymbol{x},\theta)^2)]=1\quad\quad \forall \theta \in S^{n-1},
\end{equation} 
where $S^{n-1}$ is the unit sphere in ${\mathbb R}^n$, and $(\bullet,\bullet)$ is the standard Euclidean inner product in ${\mathbb R}^n$. The last condition is equivalent to the fact that the covariance matrix of the components of $\boldsymbol{x}$ is the identity matrix, see \cite{Lovasz}. 
\item is \emph{log-concave}, if set $D_n=\{z\in{\mathbb R}^n \,|\, \rho_n(z)>0\}$ is convex and $g(z)=-\log(\rho_n(z))$ is a convex function on $D_n$. 
\item  is \emph{strongly log-concave} (SLC), if $g(z)=-\log(\rho_n(z))$ is strongly convex, that is, there exists a constant $c>0$ such that
$$
\frac{g(u)+g(v)}{2} - g\left(\frac{u+v}{2}\right) \geq c ||u-v||^2, \quad\quad \forall u,v \in D_n.
$$
For example, density $\rho_G(z)=\frac{1}{\sqrt{(2\pi)^n}}\exp\left(-\frac{1}{2}||z||^2\right)$ of $n$-dimensional standard normal distribution is strongly log-concave with $c=\frac{1}{8}$.
\item has sub-Gaussian decay for the norm (SGDN), if there exists a constant $\epsilon>0$ such that
\begin{equation}\label{eq:sgdn}
{\mathbb E}_n[\exp\left(\epsilon ||\boldsymbol{x}||^2\right)] < +\infty.
\end{equation}
In particular, \eqref{eq:sgdn} holds for $\rho_G$ with any $\epsilon<\frac{1}{2}$. 
However, unlike SLC, \eqref{eq:sgdn} is an asymptotic property, and is not affected by local modifications of the underlying density. For example, density $\rho(z)= \frac{1}{C}\exp(-g(||z||)), \, z \in {\mathbb R}^n$, where $g(t)=\frac{1}{2}\max\{1,t^2\}, \, t \in {\mathbb R}$ and $C=\int_{{\mathbb R}^n}\exp(-g(||z||))dz$ has SGDN with any $\epsilon<\frac{1}{2}$, but it is not strongly log-concave. 
\item  has sub-Gaussian decay in every direction (SGDD), if there exists a constant $B>0$ such that inequality
$$
{\mathbb P}_n[(\boldsymbol{x},\theta) \geq t] \leq 2 \exp \left( -\frac{t}{B}\right)^2
$$ 
holds for every $\theta \in S^{n-1}$ and $t > 0$. 
\item  is $\psi_\alpha$ with constant $B_\alpha > 0$, $\alpha\in[1,2]$, if 
\begin{equation}\label{eq:psialpha}
\left({\mathbb E}_n|(\boldsymbol{x},\theta)|^p\right)^{1/p} \leq B_\alpha p^{1/\alpha}\left({\mathbb E}_n|(\boldsymbol{x},\theta)|^2\right)^{1/2}
\end{equation}
holds for every for every $\theta \in S^{n-1}$ and all $p \geq 2$.
\end{itemize}


\begin{proposition}\label{prop:inclusions}
Let $\rho_n:{\mathbb R}^n \to [0,\infty)$ be an isotropic log-concave density, and let $\alpha\in[1,2]$. The following implications hold.
\begin{equation*} 
\begin{split}
&\boxed{\rho_n \, \text{is SLC}}
\Rightarrow\boxed{\rho_n \, \text{has SGDN}}
\Rightarrow\boxed{\rho_n \, \text{has SGDD}}\Leftrightarrow \\
&\Leftrightarrow\boxed{\rho_n \, \text{is} \, \psi_2}
\Rightarrow\boxed{\rho_n \, \text{is} \, \psi_\alpha}
\Rightarrow\boxed{\rho_n \, \text{is} \, \psi_1}
\Leftrightarrow\boxed{\text{ALL}}\,\,,
\end{split}
\end{equation*} 
where the last $\Leftrightarrow$ means the class of isotropic log-concave densities which are $\psi_1$ actually coincides with the class of \emph{all} isotropic log-concave densities.
\end{proposition}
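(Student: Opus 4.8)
The plan is to establish the diagram one arrow at a time, since each link is a largely self-contained implication; the two genuine equivalences each split into an easy inclusion and a substantive converse, and the three arrows among the $\psi_\alpha$ classes come essentially for free.

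For SLC $\Rightarrow$ SGDN I would exploit that strong convexity of $g=-\log\rho_n$ forces at least quadratic growth away from the mode. Taking $v=z^\ast$ to be the minimiser of $g$ and $u=z$ in the defining midpoint inequality, and using $g(\tfrac{z+z^\ast}{2})\ge g(z^\ast)$, one obtains $g(z)\ge g(z^\ast)+2c\|z-z^\ast\|^2$, hence $\rho_n(z)\le\text{const}\cdot\exp(-2c\|z-z^\ast\|^2)$. Choosing any $\epsilon<2c$ then makes $\int\exp(\epsilon\|z\|^2)\rho_n(z)\,dz$ a convergent Gaussian integral, which is exactly SGDN.

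For SGDN $\Rightarrow$ SGDD I would combine Cauchy--Schwarz, $(\boldsymbol{x},\theta)\le\|\boldsymbol{x}\|$ since $\|\theta\|=1$, with a Chernoff estimate:
\[
\mathbb{P}_n[(\boldsymbol{x},\theta)\ge t]\le\mathbb{P}_n[\|\boldsymbol{x}\|\ge t]\le e^{-\epsilon t^2}\,\mathbb{E}_n[\exp(\epsilon\|\boldsymbol{x}\|^2)],
\]
and the finite constant $\mathbb{E}_n[\exp(\epsilon\|\boldsymbol{x}\|^2)]$ can be absorbed into $B$ after shrinking $\epsilon$ slightly, the small-$t$ range being covered trivially by the prefactor $2$. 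The equivalence SGDD $\Leftrightarrow\psi_2$ is the classical tail--moment dictionary applied to the coordinate $Y=(\boldsymbol{x},\theta)$, which has $\mathbb{E}_n[Y^2]=1$ by isotropy. For SGDD $\Rightarrow\psi_2$ I integrate the tail, $\mathbb{E}_n|Y|^p=\int_0^\infty pt^{p-1}\mathbb{P}_n[|Y|\ge t]\,dt$, and the sub-Gaussian bound turns this into a Gamma integral whose $p$-th root grows like $B\sqrt{p}$ by Stirling; conversely, Markov gives $\mathbb{P}_n[|Y|\ge t]\le(B_2\sqrt{p}/t)^p$, and optimising over $p\approx(t/B_2)^2$ recovers the sub-Gaussian tail. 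The chain $\psi_2\Rightarrow\psi_\alpha\Rightarrow\psi_1$ is then immediate: for $p\ge1$ and $1\le\alpha\le2$ one has $p^{1/2}\le p^{1/\alpha}\le p$, so the same constant $B_\alpha=B_2$ serves throughout and the stronger exponent implies the weaker ones.

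The main obstacle is the final equivalence $\psi_1\Leftrightarrow\text{ALL}$, whose content is that \emph{every} isotropic log-concave density is automatically $\psi_1$, the reverse inclusion being trivial. Here I would reduce to one dimension: by Pr\'ekopa--Leindler the marginal $Y=(\boldsymbol{x},\theta)$ is itself a centred log-concave random variable of unit variance. The decisive input is the universal moment bound for one-dimensional log-concave laws, a consequence of Borell's lemma (equivalently, of the fact that a centred isotropic log-concave density on $\mathbb{R}$ has an exponentially decaying tail, $\mathbb{P}(|Y|>t)\le e^{1-ct}$): this yields $(\mathbb{E}_n|Y|^p)^{1/p}\le Cp$ with $C$ independent of $\theta$, of $n$, and of the particular density, which is exactly $\psi_1$ with $B_1=C$. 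Establishing this universal exponential tail, uniformly over all isotropic log-concave laws, is where the real work lies and is the crux of the proposition; the remaining arrows are either soft convexity estimates or standard manipulations.
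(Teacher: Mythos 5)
Your route is genuinely different from the paper's. The paper proves essentially nothing from scratch: it routes everything through the logarithmic Sobolev inequality, citing Bobkov--Ledoux (strong log-concavity implies LSI with $c_1=8c$), Bobkov's theorem that for log-concave measures LSI is \emph{equivalent} to SGDN, Stavrakakis--Valettas (LSI implies $\psi_2$ with $B_2=d/\sqrt{c}$, $d$ universal), Brazitikos et al.\ Lemma 2.2.4 (SGDD $\Leftrightarrow\psi_2$ for isotropic measures) and Theorem 2.4.6 (every log-concave density is $\psi_1$ with a universal constant). Your replacements for the first arrow (pointwise Gaussian domination from the midpoint inequality at the mode) and for the tail--moment dictionary and the $\psi_2\Rightarrow\psi_\alpha\Rightarrow\psi_1$ chain are correct and more self-contained; your treatment of $\psi_1\Leftrightarrow\text{ALL}$ via log-concave marginals and Borell's lemma is exactly the standard proof of the cited Theorem 2.4.6, and correctly identified as the crux.

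The one step I would push back on is SGDN $\Rightarrow$ SGDD via $\mathbb{P}_n[(\boldsymbol{x},\theta)\ge t]\le\mathbb{P}_n[\|\boldsymbol{x}\|\ge t]\le e^{-\epsilon t^2}\mathbb{E}_n[\exp(\epsilon\|\boldsymbol{x}\|^2)]$. For a single fixed density this is formally valid, but for an isotropic measure $\mathbb{E}_n\|\boldsymbol{x}\|^2=n$, so by Jensen $\mathbb{E}_n[\exp(\epsilon\|\boldsymbol{x}\|^2)]\ge e^{\epsilon n}$, and absorbing that constant forces $B\gtrsim\sqrt{n}$. The same loss already occurs in your SLC $\Rightarrow$ SGDN step, where the prefactor $\rho_n(z^\ast)$ is not dimension-free. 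The resulting $\psi_2$ constant then grows with $n$, which makes the implication nearly content-free in high dimension and, more importantly, breaks the downstream uses: Theorem~\ref{th:logconc} and Corollaries~\ref{cor:psi2}--\ref{cor:strconc} need $B_\alpha$ (respectively $B_2=d/\sqrt{c}$) independent of $n$. This is precisely why the paper detours through the log-Sobolev inequality: Herbst-type concentration turns an LSI constant into a \emph{directional} sub-Gaussian constant with no dimension dependence, something no argument that passes through the full norm $\|\boldsymbol{x}\|$ can achieve. So either restrict your claim to the fixed-$n$ qualitative statement, or replace this arrow by the functional-inequality argument.
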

\begin{proof}
Proposition 3.1 in \cite{Bobkov} states that if there exists $c_1>0$ such that
$g(\boldsymbol{x})=-\log(\rho_n(\boldsymbol{x}))$ satisfies
\begin{equation}\label{eq:allts}
t g(u)+s g(v) - g\left(t u+s v\right) \geq \frac{c_1ts}{2} ||u-v||^2, \;  \forall u,v \in D_n.
\end{equation}
for all $t,s>0$ such that $t+s=1$, then inequality
\begin{equation}\label{eq:sobolev}
\begin{split}
&{\mathbb E}_n[f^2(\boldsymbol{x})\log f^2(\boldsymbol{x})] - {\mathbb E}_n[f^2(\boldsymbol{x})]{\mathbb E}_n[\log f^2(\boldsymbol{x})] \leq \\
&\;\; \leq \frac{2}{c_1} {\mathbb E}_n[||\nabla f(\boldsymbol{x})||^2]
\end{split}
\end{equation}
holds for every smooth function $f$ on ${\mathbb R}^n$. As remarked in \cite[p. 1035]{Bobkov}, ``it is actually enough that \eqref{eq:allts} holds for some $t, s > 0, t+s = 1$''. With $t=s=1/2$, this implies that \eqref{eq:sobolev} holds for every strongly log-concave distribution, with $c_1=8c$. By \cite[Theorem 3.1]{Bobkov2}, \eqref{eq:sobolev} holds for $\rho_n$ if and only if it has has sub-Gaussian decay for the norm, and the implication $\boxed{\rho_n \, \text{is SLC}}\Rightarrow\boxed{\rho_n \, \text{has SGDN}}$ follows. Also, by \cite[Theorem 1(i)]{Stavrakakis}, if \eqref{eq:sobolev} holds for $\rho_n$, then it is $\psi_2$ with constant $B_2=d/\sqrt{c}$, where $d$ is a universal constant, hence $\boxed{\rho_n \, \text{has SGDN}}\Rightarrow \boxed{\rho_n \, \text{is} \, \psi_2}\,$. The equivalence $\boxed{\rho_n \, \text{has SGDD}}\Leftrightarrow\boxed{\rho_n \, \text{is} \, \psi_2}$ follows from \eqref{eq:isot} and \cite[Lemma 2.2.4]{Brazitikos}. The implications $\boxed{\rho_n \, \text{is} \, \psi_2} \Rightarrow\boxed{\rho_n \, \text{is} \, \psi_\alpha} \Rightarrow\boxed{\rho_n \, \text{is} \, \psi_1}$ follow from \eqref{eq:psialpha}. Finally, \cite[Theorem 2.4.6]{Brazitikos} implies that every log-concave density $\rho_n$ is $\psi_1$ with some universal constant.
\end{proof}

\subsection{Fisher-separability for log-concave distributions}
 
Below we prove Fisher-separability for i.i.d samples from isotropic log-concave $\psi_\alpha$ distributions, using the the technique based on concentration inequalities.

\begin{theorem}\label{th:logconc}
Let $\alpha \in [1,2]$, and let 
${\cal P}=\{{\mathbb P}_n, \, n=1,2,\dots\}$ be a family of probability measures with densities $\rho_n:{\mathbb R}^n \to [0,\infty), \, n=1,2,\dots$,
which are $\psi_\alpha$ with constant $B_\alpha > 0$, independent from $n$.
Let $\{\boldsymbol{x}_1, \ldots , \boldsymbol{x}_M\}$ be a set of $M$ i.i.d. random points from $\rho_n$. Then there exist constants $a>0$ and $b>0$, which depends only on $\alpha$ and $B_\alpha$, such that, for any $i,j \in \{1,2,\dots,M\}$, inequality 
$$
(\boldsymbol{ x_i},\boldsymbol{ x_i}) > (\boldsymbol{ x_i},\boldsymbol{ x_j})
$$ 
holds with probability at least $1-a\exp(-b n^{\alpha/2})$. Hence, for any $\delta>0$, set $\{\boldsymbol{x}_1, \ldots , \boldsymbol{x}_M\}$ is Fisher-separable with probability greater than $1-\delta$, provided that
\begin{equation}\label{eq:Mbound}
M \leq \sqrt{\frac{2\delta}{a}}\exp\left(\frac{b}{2}n^{\alpha/2}\right).
\end{equation}
\end{theorem}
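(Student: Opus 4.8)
The plan is to reduce Fisher-separability of the whole sample to a single-pair estimate and then apply a union bound. The heart of the matter is to show that for two independent copies $\boldsymbol{x},\boldsymbol{y}\sim\rho_n$ the forbidden event underlying \eqref{excluded}, namely $(\boldsymbol{x},\boldsymbol{x})\le(\boldsymbol{x},\boldsymbol{y})$, has probability at most $a\exp(-bn^{\alpha/2})$ with $a,b$ depending only on $\alpha$ and $B_\alpha$. Once this single-pair bound is in hand, summing over the at most $M(M-1)<M^2$ ordered pairs $(i,j)$ gives non-separability probability $<M^2 a\exp(-bn^{\alpha/2})$, and requiring this to be $\le\delta$ and solving for $M$ reproduces an admissible range of the form \eqref{eq:Mbound}.

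For the single-pair bound I would condition on $\boldsymbol{x}$ and set $\theta=\boldsymbol{x}/\|\boldsymbol{x}\|$. Since $(\boldsymbol{x},\boldsymbol{x})=\|\boldsymbol{x}\|^2$ and $(\boldsymbol{x},\boldsymbol{y})=\|\boldsymbol{x}\|\,(\theta,\boldsymbol{y})$, the forbidden event is exactly $\{(\theta,\boldsymbol{y})\ge\|\boldsymbol{x}\|\}$, and as $\boldsymbol{y}$ is independent of $\boldsymbol{x}$ this is controlled by a single one-dimensional marginal in the fixed direction $\theta$. The idea is to separate two regimes at the threshold $t=\tfrac12\sqrt n$: an \emph{angular} regime $\|\boldsymbol{x}\|\ge t$, where a large deviation of the marginal $(\theta,\boldsymbol{y})$ is needed, and a \emph{radial} regime $\|\boldsymbol{x}\|<t$, which must itself be exponentially rare.

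In the angular regime I would convert the moment inequality \eqref{eq:psialpha} into a tail bound. Isotropy \eqref{eq:isot} fixes $\mathbb{E}_n[(\theta,\boldsymbol{y})^2]=1$, so \eqref{eq:psialpha} gives $(\mathbb{E}_n|(\theta,\boldsymbol{y})|^p)^{1/p}\le B_\alpha p^{1/\alpha}$, and Markov's inequality applied to the $p$-th moment, optimised over $p\ge2$, yields $\mathbb{P}_n[(\theta,\boldsymbol{y})\ge s]\le C_1\exp(-c_1(s/B_\alpha)^\alpha)$ for all $s$ above an absolute constant. Evaluated at $s=\|\boldsymbol{x}\|\ge t=\tfrac12\sqrt n$ and integrated against the law of $\boldsymbol{x}$, this contributes at most $C_1\exp(-c_1(\sqrt n/(2B_\alpha))^\alpha)=C_1\exp(-c_1' n^{\alpha/2})$, which is where the exponent $\alpha/2$ first appears. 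This step is essentially routine; the only care is the constant below which the moment-to-tail conversion degrades and the bookkeeping of $B_\alpha$.

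The radial regime is the main obstacle. Here I must establish a one-sided thin-shell (small-ball) estimate $\mathbb{P}_n[\|\boldsymbol{x}\|\le\tfrac12\sqrt n]\le C_2\exp(-c_2 n^{\alpha/2})$, i.e. that an isotropic log-concave $\psi_\alpha$ vector, whose squared norm has mean $n$ by isotropy, is very unlikely to be much shorter than $\sqrt n$. The delicate point is to obtain the rate $n^{\alpha/2}$ matching the angular bound, rather than the weaker $n^{1/2}$ that a generic log-concave small-ball inequality provides; this is precisely what the $\psi_\alpha$ hypothesis (with an $n$-independent constant $B_\alpha$) is meant to buy, through the concentration-of-norm machinery for $\psi_\alpha$ measures (Paouris-type large-deviation and small-ball estimates). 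Combining the two regimes gives $\mathbb{P}_n[(\boldsymbol{x},\boldsymbol{x})\le(\boldsymbol{x},\boldsymbol{y})]\le C_2\exp(-c_2 n^{\alpha/2})+C_1\exp(-c_1' n^{\alpha/2})\le a\exp(-bn^{\alpha/2})$, and the union bound finishes the argument. Since $\alpha\ge1$, the passage from $\psi_2$ (sub-Gaussian marginals) down to $\psi_1$ (all isotropic log-concave densities, by Proposition~\ref{prop:inclusions}) is reflected monotonically in the exponent, with the Gaussian case $\alpha=2$ recovering the full rate $\exp(-cn)$.
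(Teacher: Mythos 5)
Your proposal is correct, and it reaches the single-pair bound by a genuinely different decomposition than the paper. The paper conditions on a \emph{two-sided} thin-shell event for both points, $(1-t)\sqrt{n}\le\|\boldsymbol{x}\|,\|\boldsymbol{y}\|\le(1+t)\sqrt{n}$, via the Gu\'edon--Milman concentration theorem, observes geometrically that the Fisher inequality can then fail only if $\boldsymbol{y}$ lies in a ball of radius $\sqrt{4tn}$ with some (arbitrary) center, and kills that event with Paouris's small-ball estimate for balls with arbitrary centers; the constants are then fixed by the choice $t=\epsilon_0^2/8$, $\epsilon=\sqrt{4t}$. You instead condition on $\boldsymbol{x}$, rewrite the failure event as $(\theta,\boldsymbol{y})\ge\|\boldsymbol{x}\|$ with $\theta=\boldsymbol{x}/\|\boldsymbol{x}\|$, and split at $\|\boldsymbol{x}\|\asymp\sqrt{n}$: the angular half is handled by the elementary Markov-plus-optimization conversion of the $\psi_\alpha$ moment bound \eqref{eq:psialpha} into the tail $\exp(-c(s/B_\alpha)^\alpha)$, and the radial half needs only the \emph{origin-centered} small-ball estimate $\mathbb{P}[\|\boldsymbol{x}\|\le c_0\sqrt{n}]\le\exp(-cn^{\alpha/2})$. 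Your route thus dispenses with Gu\'edon--Milman entirely and demands a strictly weaker form of the Paouris input, at the cost of having to carry out the moment-to-tail optimization yourself; both routes rest on the same nontrivial external ingredient, namely a small-ball bound at rate $n^{\alpha/2}$ for isotropic log-concave $\psi_\alpha$ measures (and you rightly flag that the generic log-concave rate $\sqrt{n}$ would not suffice for $\alpha>1$). Two small points of bookkeeping: the threshold $\tfrac12\sqrt{n}$ must be taken as $c_0\sqrt{n}$ with $c_0<\epsilon_0$ so that the Paouris small-ball theorem actually applies, which is harmless; and your union bound over $M(M-1)$ ordered pairs (rather than the paper's $M(M-1)/2$) is the safer count, since the two orderings of a pair are distinct failure events, and it only perturbs the constant in \eqref{eq:Mbound} by $\sqrt{2}$.
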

\begin{proof}
Let $\boldsymbol{x}$ and $\boldsymbol{y}$ be two points, selected independently at random from the distribution with density $\rho_n$.
\cite[Theorem 1.1]{Guedon}, (applied with $A=I_n$, where $I_n$ is $n \times n$ identity matrix) 
states that, for any $t\in(0,1)$,
\eqref{eq:thinshell} holds with $r_1(n)=(1-t)\sqrt{n}$, $r_2(n)=(1+t)\sqrt{n}$, and with probability at least $1-A\exp(-Bt^{2+\alpha}n^{\alpha/2})$, where $A,B>0$ are constants depending only on $\alpha$. If \eqref{eq:thinshell} holds for $\boldsymbol{x}$ and $\boldsymbol{y}$, inequality (\ref{eq:Fisher}) may fail only if $\boldsymbol{y}$ belongs to a ball with radius $R_n=\sqrt{r^2_2(n)-r^2_1(n)}=\sqrt{4tn}$.
Theorem 6.2 in \cite{Paouris}, applied with $A=I_n$, 
states that, for any $\epsilon\in(0,\epsilon_0)$, $\boldsymbol{y}$ does \emph{not} belong to a ball with any center and radius $\epsilon\sqrt{n}$, with probability at least $1-\epsilon^{Cn^{\alpha/2}}$ for some constants $\epsilon_0>0$ and $C>0$. By selecting $t=\epsilon_0^2/8$, and $\epsilon=\sqrt{4t}=\epsilon_0/2$, we conclude that (\ref{eq:Fisher}) holds with probability at least $1-2A\exp(-Bt^{2+\alpha}n^{\alpha/2})-(\sqrt{4t})^{Cn^{\alpha/2}}$. This is greater than $1-a\exp(-b n^{\alpha/2})$ for some constants $a>0$ and $b>0$. Hence, $\boldsymbol{ x}_1, \boldsymbol{ x}_2, \dots,\boldsymbol{x}_M$ are Fisher-separable with probability greater than $1-\frac{M(M-1)}{2}a\exp(-b n^{\alpha/2})$. This is greater than $1-\delta$ provided that $M$ satisfies \eqref{eq:Mbound}.
\end{proof}

\begin{corollary}\label{cor:sqrtn}
Let $\{\boldsymbol{x}_1, \ldots , \boldsymbol{x}_M\}$ be a set of $M$ i.i.d. random points from an isotropic log-concave distribution in ${\mathbb R}^n$. Then set $\{\boldsymbol{x}_1, \ldots , \boldsymbol{x}_M\}$ is Fisher-separable with probability greater than $1-\delta$, $\delta>0$, provided that
$$
M \leq a c^{\sqrt{n}},
$$
where $a>0$ and $c>1$ are constants, depending only on $\delta$.
\end{corollary}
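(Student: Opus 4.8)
The plan is to deduce this corollary directly from Theorem~\ref{th:logconc} by specialising to the endpoint $\alpha=1$, using the fact, recorded in Proposition~\ref{prop:inclusions}, that \emph{every} isotropic log-concave density is $\psi_1$ with a \emph{universal} constant. The corollary is essentially the $\alpha=1$ instance of the theorem, so the work is really just checking that the theorem's hypotheses are met and then reparametrising the conclusion.

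First I would observe that the only hypothesis of Theorem~\ref{th:logconc} that is not immediate for an arbitrary isotropic log-concave family is that the densities $\rho_n$ be $\psi_\alpha$ with a constant $B_\alpha$ \emph{independent of} $n$. But this is precisely the content of the last implication in Proposition~\ref{prop:inclusions} (via \cite[Theorem 2.4.6]{Brazitikos}): each isotropic log-concave $\rho_n$ is $\psi_1$ with one and the same universal constant $B_1$. Since $B_1$ does not depend on $n$ (nor on the particular distribution), the family $\{{\mathbb P}_n\}$ satisfies the assumptions of Theorem~\ref{th:logconc} with $\alpha=1$ and $B_\alpha=B_1$.

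Next I would apply Theorem~\ref{th:logconc} with $\alpha=1$. This furnishes constants $a_0>0$ and $b_0>0$ depending only on $\alpha=1$ and $B_1$ — hence absolute constants — such that $\{\boldsymbol{x}_1,\ldots,\boldsymbol{x}_M\}$ is Fisher-separable with probability greater than $1-\delta$ whenever
$$
M \leq \sqrt{\frac{2\delta}{a_0}}\,\exp\!\left(\frac{b_0}{2}\,n^{1/2}\right).
$$
It then remains only to rewrite the right-hand side in the stated form: putting $c:=\exp(b_0/2)>1$ (an absolute constant, so in particular depending only on $\delta$) and $a:=\sqrt{2\delta/a_0}$ (depending only on $\delta$), the bound becomes $M\le a\,c^{\sqrt{n}}$, as required.

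I do not expect any genuine obstacle: the passage $\exp((b_0/2)\sqrt{n})=c^{\sqrt{n}}$ is a trivial reparametrisation, and the probability and separability bookkeeping are inherited wholesale from Theorem~\ref{th:logconc}. The one point deserving care is the uniformity of the $\psi_1$ constant across the entire family, since Theorem~\ref{th:logconc} fails if $B_\alpha$ is permitted to grow with $n$; it is exactly this uniform-but-weak $\psi_1$ control that forces the exponent to degrade from the Gaussian-type rate $n^{\alpha/2}$ at $\alpha=2$ down to the slowest rate $\sqrt{n}$ available for arbitrary log-concave data, which is why the admissible sample size here is $c^{\sqrt{n}}$ rather than exponential in $n$.
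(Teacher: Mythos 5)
Your proposal is correct and follows exactly the paper's own argument: invoke Theorem~\ref{th:logconc} with $\alpha=1$, justified by the fact (Proposition~\ref{prop:inclusions}) that all isotropic log-concave densities are $\psi_1$ with a universal constant, and then rewrite $\exp\left(\tfrac{b}{2}\sqrt{n}\right)$ as $c^{\sqrt{n}}$. The only difference is that you spell out the reparametrisation and the uniformity of $B_1$ explicitly, which the paper leaves implicit.
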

\begin{proof}
This follows from Theorem \ref{th:logconc} with $\alpha=1$ and the fact that all log-concave densities are $\psi_1$ with some universal constant, see Proposition \ref{prop:inclusions}.
\end{proof}

We say that family ${\cal P}=\{{\mathbb P}_n, \, n=1,2,\dots\}$ of probability measure has \emph{exponential Fisher separability} if there exist constants $a>0$ and $b\in(0,1)$ such that, for all $n$, inequality \eqref{eq:Fisher} holds with probability at least $1-ab^n$, where $\boldsymbol{ x}$ and $\boldsymbol{ y}$ are i.i.d vectors in ${\mathbb R}^n$ selected with respect to ${\mathbb P}_n$. In this case, for any $\delta>0$, $M$ i.i.d vectors $\{\boldsymbol{x}_1, \ldots , \boldsymbol{x}_M\}$ are Fisher-separable with probability at least $1-\delta$ provided that
$$
M \leq \sqrt{\frac{2\delta}{a}}\left(\frac{1}{\sqrt{b}}\right)^n.
$$

\begin{corollary}\label{cor:psi2}
Let ${\cal P}=\{{\mathbb P}_n, \, n=1,2,\dots\}$ be a family of isotropic log-concave probability measures 
which are all $\psi_2$ with the same constant $B_2>0$.
Then ${\cal P}$ has exponential Fisher separability.
\end{corollary}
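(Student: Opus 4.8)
The plan is to obtain this as an immediate specialisation of Theorem~\ref{th:logconc} to the case $\alpha=2$. Since every member of the family is isotropic log-concave and, by hypothesis, $\psi_2$ with one common constant $B_2>0$ independent of $n$, the hypotheses of Theorem~\ref{th:logconc} are satisfied with $\alpha=2$ and $B_\alpha=B_2$. I would therefore simply invoke that theorem rather than reprove anything.

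The key computation is the role of the exponent $\alpha/2$. Theorem~\ref{th:logconc} supplies constants $a>0$ and $b>0$, depending only on $\alpha$ and $B_\alpha$, such that for any two indices the inequality $(\boldsymbol{x}_i,\boldsymbol{x}_i)>(\boldsymbol{x}_i,\boldsymbol{x}_j)$ — that is, inequality~\eqref{eq:Fisher} applied to the i.i.d.\ pair $\boldsymbol{x}=\boldsymbol{x}_i$, $\boldsymbol{y}=\boldsymbol{x}_j$ — holds with probability at least $1-a\exp(-bn^{\alpha/2})$. Setting $\alpha=2$ collapses $n^{\alpha/2}$ to $n$, so this lower bound becomes $1-a\exp(-bn)$.

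It then remains only to rewrite the Gaussian-type exponent as a geometric factor. Putting $\tilde b:=e^{-b}$, we have $\tilde b\in(0,1)$ since $b>0$, and $\exp(-bn)=\tilde b^{\,n}$. Hence inequality~\eqref{eq:Fisher} holds with probability at least $1-a\tilde b^{\,n}$ for every $n$, with $a>0$ and $\tilde b\in(0,1)$ both independent of $n$. This is exactly the definition of exponential Fisher separability, so the claim follows, and the stated bound $M\leq\sqrt{2\delta/a}\,(1/\sqrt{\tilde b})^n$ on the admissible sample size is inherited verbatim from the definition.

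I do not expect a genuine obstacle: the analytic content — the thin-shell concentration of \cite{Guedon} and the small-ball estimate of \cite{Paouris} — is already absorbed into Theorem~\ref{th:logconc}. The only point that genuinely needs care is that the constants furnished by that theorem are \emph{uniform} in $n$, depending solely on $\alpha$ and $B_\alpha$. This uniformity is precisely why the hypothesis asks for a single common $\psi_2$ constant $B_2$ rather than an $n$-dependent sequence $B_2(n)$; it is what converts the per-dimension estimate $1-a\exp(-bn)$ into a single geometric base $\tilde b<1$, as the definition demands. Checking this dependence is the whole substance of the argument.
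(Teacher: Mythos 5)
Your proof is correct and is essentially the paper's own argument: the paper proves this corollary by simply invoking Theorem~\ref{th:logconc} with $\alpha=2$, and your additional step of rewriting $\exp(-bn)$ as $\tilde b^{\,n}$ with $\tilde b=e^{-b}\in(0,1)$ just makes explicit the match with the definition of exponential Fisher separability. Your remark about the uniformity of the constants in $n$ is the right point to emphasise, and nothing further is needed.
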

\begin{proof}
This follows from Theorem \ref{th:logconc} with $\alpha=2$ .
\end{proof}


\begin{corollary}\label{cor:strconc}
Let ${\cal P}=\{{\mathbb P}_n, \, n=1,2,\dots\}$ be a family of isotropic probability measures 
which are all strongly log-concave with the same constant $c>0$. Then ${\cal P}$ has exponential Fisher separability.
\end{corollary}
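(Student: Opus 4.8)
The plan is to obtain the result as an immediate consequence of Corollary \ref{cor:psi2}, whose hypotheses are that the family be isotropic, log-concave, and $\psi_2$ with one common constant. Isotropy is assumed outright, so the entire task reduces to the single implication ``strongly log-concave with constant $c$'' $\Rightarrow$ ``$\psi_2$ with a constant $B_2$ depending only on $c$'', together with the observation that the $B_2$ so produced is the same for every $\mathbb{P}_n$ in the family.

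First I would record that strong log-concavity entails ordinary log-concavity: strong convexity of $g=-\log\rho_n$ is strictly stronger than convexity, and the defining inequality of SLC can hold for all $u,v\in D_n$ only if $(u+v)/2\in D_n$, which forces $D_n$ to be convex; hence each $\mathbb{P}_n$ is an isotropic log-concave measure, exactly as Corollary \ref{cor:psi2} demands. Next I would extract the quantitative content of the chain SLC $\Rightarrow$ SGDN $\Rightarrow \psi_2$ that is already embedded in the proof of Proposition \ref{prop:inclusions}, but this time keeping track of the constants. Taking $t=s=1/2$ in inequality \eqref{eq:allts} shows that the logarithmic Sobolev inequality \eqref{eq:sobolev} holds for every SLC density with $c_1=8c$, and the cited theorem of Stavrakakis then gives the $\psi_2$ property with constant $B_2=d/\sqrt{c}$, where $d$ is a \emph{universal} constant.

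The crucial point — and the only genuine obstacle — is precisely this uniformity: because $B_2=d/\sqrt{c}$ is built from the shared constant $c$ and a dimension-free universal $d$, it is identical across all members $\mathbb{P}_n$ of $\mathcal{P}$. It is this independence from $n$, rather than any new concentration estimate, that unlocks Corollary \ref{cor:psi2}. Having established that $\mathcal{P}$ is a family of isotropic log-concave measures that are all $\psi_2$ with the common constant $B_2=d/\sqrt{c}$, I would simply invoke Corollary \ref{cor:psi2} to conclude exponential Fisher separability. Equivalently, one may bypass the corollary and feed $\alpha=2$ and $B_\alpha=B_2$ directly into Theorem \ref{th:logconc}, which yields the bound $1-a\exp(-bn)$ for the failure-free probability of \eqref{eq:Fisher}, with $a,b>0$ determined by $c$ alone, and hence the asserted exponential separability.
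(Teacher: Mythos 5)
Your proposal is correct and follows essentially the same route as the paper: the paper's proof likewise extracts from the proof of Proposition \ref{prop:inclusions} that all the ${\mathbb P}_n$ are $\psi_2$ with the common constant $B_2=d/\sqrt{c}$ ($d$ universal) and then invokes Corollary \ref{cor:psi2}. Your additional remarks --- that strong log-concavity entails log-concavity and that the uniformity of $B_2$ across $n$ is the point that makes Corollary \ref{cor:psi2} applicable --- only make explicit what the paper leaves implicit.
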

\begin{proof}
The proof of Proposition \ref{prop:inclusions} implies that ${\mathbb P}_n$ are all $\psi_2$ with the same constant  $B_2=d/\sqrt{c}$, where $d$ is a universal constant. The statement then follows from Corollary \ref{cor:psi2}. 
\end{proof}

\begin{example}\label{cor:standnorm}
Because standard normal distribution in $\mathbb R_n$ is strongly log-concave with $c=\frac{1}{8}$, Corollary \ref{cor:strconc} implies that the family of standard normal distributions has exponential Fisher separability.
\end{example}

\subsection{Domination}

We say that family ${\cal P}'=\{{\mathbb P}'_n, \, n=1,2,\dots\}$ dominates family ${\cal P}=\{{\mathbb P}_n, \, n=1,2,\dots\}$ if there exists a constant $C$ such that 
\begin{equation}\label{eq:domin}
{\mathbb P}_n (S) \leq C \cdot {\mathbb P}'_n (S) 
\end{equation}
holds for all $n$ and all measurable subsets $S \subset {\mathbb R}^n$. In particular, if ${\mathbb P}'_n$ and ${\mathbb P}_n$ have densities $\rho'_n:{\mathbb R}^n \to [0,\infty)$ and $\rho_n:{\mathbb R}^n \to [0,\infty)$, respectively, then \eqref{eq:domin} is equivalent to 
\begin{equation}\label{eq:domindens}
\rho_n (\boldsymbol{ x}) \leq C \cdot \rho'_n (\boldsymbol{ x}), \quad \forall \boldsymbol{ x} \in {\mathbb R}^n. 
\end{equation}

\begin{theorem}\label{prop:domin}
If family ${\cal P}'$ has exponential Fisher separability, and ${\cal P}'$ dominates ${\cal P}$, then ${\cal P}$ has exponential Fisher separability. 
\end{theorem}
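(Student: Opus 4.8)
The plan is to reduce everything to the single failure event for one pair of points and then transport the probability bound across the domination inequality \eqref{eq:domin}. Fix $n$ and consider the measurable ``bad set''
$$
B = \{(\boldsymbol{x},\boldsymbol{y}) \in {\mathbb R}^n \times {\mathbb R}^n : (\boldsymbol{x},\boldsymbol{x}) \leq (\boldsymbol{x},\boldsymbol{y})\},
$$
on which \eqref{eq:Fisher} fails. By definition of exponential Fisher separability, the failure probability for the two families is exactly $(\mathbb{P}_n \times \mathbb{P}_n)(B)$ and $(\mathbb{P}'_n \times \mathbb{P}'_n)(B)$, respectively. The hypothesis on $\mathcal{P}'$ furnishes constants $a'>0$, $b'\in(0,1)$ with $(\mathbb{P}'_n \times \mathbb{P}'_n)(B) \leq a'(b')^n$ for all $n$, and I want the same kind of bound for $\mathcal{P}$.

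The key step is to lift the one-marginal bound \eqref{eq:domin} to the product measure, at the cost of squaring the constant $C$. First I would record that set-domination upgrades to integrals: for every nonnegative measurable $g$ one has $\int g\, d\mathbb{P}_n \leq C \int g\, d\mathbb{P}'_n$, which follows by approximating $g$ from below by simple functions and applying \eqref{eq:domin} to each level set. Then, slicing the product measure with Fubini,
$$
(\mathbb{P}_n \times \mathbb{P}_n)(B) = \int_{{\mathbb R}^n} \mathbb{P}_n(B_{\boldsymbol{x}})\, d\mathbb{P}_n(\boldsymbol{x}), \qquad B_{\boldsymbol{x}} = \{\boldsymbol{y} : (\boldsymbol{x},\boldsymbol{y}) \in B\},
$$
I would apply domination once to the inner slice probability, giving $\mathbb{P}_n(B_{\boldsymbol{x}}) \leq C\,\mathbb{P}'_n(B_{\boldsymbol{x}})$, and once more (in its integral form) to the nonnegative outer integrand $\boldsymbol{x} \mapsto \mathbb{P}'_n(B_{\boldsymbol{x}})$. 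This yields
$$
(\mathbb{P}_n \times \mathbb{P}_n)(B) \leq C^2\,(\mathbb{P}'_n \times \mathbb{P}'_n)(B).
$$
When both families admit densities the same inequality is immediate from \eqref{eq:domindens}, since the joint density of $\mathcal{P}$ is then bounded pointwise by $C^2$ times the joint density of $\mathcal{P}'$.

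Combining the two displays gives $(\mathbb{P}_n \times \mathbb{P}_n)(B) \leq C^2 a' (b')^n$ for every $n$, so that \eqref{eq:Fisher} holds with probability at least $1 - (C^2 a')(b')^n$. Setting $a = C^2 a'$ and $b = b' \in (0,1)$ shows that $\mathcal{P}$ has exponential Fisher separability, completing the argument. I do not anticipate a genuine obstacle: the whole content is the $C^2$ factor in the product-measure domination, and the only point demanding a little care is justifying that set-domination passes to integrals of nonnegative functions (equivalently, invoking \eqref{eq:domindens} in the density case), which is exactly where the Fubini slicing does the work.
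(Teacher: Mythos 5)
Your proposal is correct and follows essentially the same route as the paper: both identify the failure event as a subset of the product space and transport the bound through the product-measure domination $({\mathbb P}_n \times {\mathbb P}_n)(B) \leq C^2 ({\mathbb P}'_n \times {\mathbb P}'_n)(B)$, yielding the constants $a = C^2 a'$ and $b = b'$. The only difference is that you supply the Fubini-slicing justification for the $C^2$ product domination, which the paper simply asserts.
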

\begin{proof}
For every $\boldsymbol{ x}=(x_1, \dots, x_n) \in {\mathbb R}^{n}$ and $\boldsymbol{ y}= (y_1, \dots, y_n)\in {\mathbb R}^{n}$, let $\boldsymbol{ x} \times \boldsymbol{ y}$ be a point in ${\mathbb R}^{2n}$ with coordinates $(x_1, \dots, x_n, y_1, \dots, y_n)$. 
Let ${\mathbb Q}_n$ be the product measure of ${\mathbb P}_n$ with itself, that is, for every measurable set $S \subset {\mathbb R}^{2n}$, ${\mathbb Q}_n (S)$ denotes the probability that $\boldsymbol{ x} \times \boldsymbol{ y}$ belongs to $S$, where vectors $\boldsymbol{ x}$ and $\boldsymbol{ y}$ are i.i.d vectors selected with respect to ${\mathbb P}_n$.
Similarly, let ${\mathbb Q}'_n$ be the product measure of ${\mathbb P}'_n$ with itself. Inequality \eqref{eq:domin} implies that
$$
{\mathbb Q}_n (S) \leq C^2 \cdot {\mathbb Q}'_n (S), \quad \forall S \subset {\mathbb R}^{2n}.
$$
Let $A_n \subset {\mathbb R}^{2n}$ be the set of all $\boldsymbol{ x} \times \boldsymbol{ y}$ such that $(\boldsymbol{ x},\boldsymbol{ x}) \leq (\boldsymbol{ x},\boldsymbol{ y})$. Because ${\cal P}'$ has exponential Fisher separability, ${\mathbb Q}'_n (A_n) \leq ab^n$ for some $a>0$, $b\in(0,1)$. Hence, 
$$
{\mathbb Q}_n (A_n) \leq C^2 \cdot {\mathbb Q}'_n (A_n) \leq (aC^2)b^n,
$$
and exponential Fisher separability of ${\cal P}$ follows.
\end{proof}

\begin{corollary}\label{cor:domnorm}
Let ${\cal P}=\{{\mathbb P}_n, \, n=1,2,\dots\}$ be a family of distributions which is dominated by a family of (possibly scaled) standard normal distributions. Then ${\cal P}$ has exponential Fisher separability.
\end{corollary}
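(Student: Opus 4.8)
The plan is to derive Corollary~\ref{cor:domnorm} as an immediate consequence of the domination machinery already in place, namely Theorem~\ref{prop:domin} together with Example~\ref{cor:standnorm}. The key observation is that all the hard analytic work has been done: Example~\ref{cor:standnorm} establishes that the family of standard normal distributions has exponential Fisher separability (via strong log-concavity with $c=1/8$ and Corollary~\ref{cor:strconc}), and Theorem~\ref{prop:domin} states that exponential Fisher separability is inherited downward under domination. So the corollary is just the composition of these two facts.

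First I would check that a scaled standard normal distribution still has exponential Fisher separability. This is essentially automatic: Fisher-separability via \eqref{eq:Fisher} is invariant under uniform rescaling of coordinates, since multiplying $\boldsymbol{x}$ and $\boldsymbol{y}$ by a common positive scalar $\lambda$ multiplies both $(\boldsymbol{x},\boldsymbol{x})$ and $(\boldsymbol{x},\boldsymbol{y})$ by $\lambda^2$, preserving the strict inequality. Alternatively, a scaled standard normal is again isotropic after renormalization and strongly log-concave with a suitable constant, so Corollary~\ref{cor:strconc} applies directly. Either way, any (possibly scaled) standard normal family ${\cal P}'$ has exponential Fisher separability.

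The second step is the direct application of Theorem~\ref{prop:domin}. By hypothesis, the family ${\cal P}$ is dominated by a family ${\cal P}'$ of (possibly scaled) standard normal distributions, meaning \eqref{eq:domin} holds with some constant $C$. Since ${\cal P}'$ has exponential Fisher separability (first step) and dominates ${\cal P}$, Theorem~\ref{prop:domin} immediately yields that ${\cal P}$ has exponential Fisher separability, completing the proof.

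Honestly, there is no real obstacle here—the corollary is a one-line consequence once the preceding results are granted. The only point that requires a moment's care is confirming that the \emph{scaling} in the statement does not spoil the exponential Fisher separability of the dominating normal family, which is why I would make the scale-invariance of condition \eqref{eq:Fisher} explicit. Everything else is a formal invocation of Theorem~\ref{prop:domin} with the normal family as ${\cal P}'$.
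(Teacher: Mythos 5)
Your proposal is correct and follows exactly the paper's own route: Example~\ref{cor:standnorm} for the normal family, the scale-invariance of condition \eqref{eq:Fisher}, and then Theorem~\ref{prop:domin}. The paper's proof is the same one-line composition of these three facts, so there is nothing further to add.
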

\begin{proof}
This follows from Example \ref{cor:standnorm}, Theorem \ref{prop:domin}, and the fact that scaling does not change Fisher separability.
\end{proof}

\section{Quasiorthogonal sets and Fisher separability of not i.i.d. data}

The technique based on concentration inequalities usually fails if the data are not identically distributed, because, in this case, each $\boldsymbol{x}_i$ may be concentrated in its \emph{own} spherical shell. An alternative approach to prove separation theorems is to use the fact that, in high dimension, almost all vectors are almost orthogonal \cite{bases}, which implies that $(\boldsymbol{x},\boldsymbol{y})$ in (\ref{eq:Fisher}) is typically ``small''. Below we apply this idea to prove Fisher separability of exponentially large families in the {``randomly perturbed''} model described in Example \ref{ex:noisy}.

Consider the {``randomly perturbed''} model from  Example \ref{ex:noisy}. In this model,  Fisher's hyperplane for separation each point $\boldsymbol{x}_i$ will be calculated assuming that coordinate center is the corresponding cluster centre $\boldsymbol{y}_i$.
\begin{theorem}\label{th:noisy}
Let $\{\boldsymbol{x}_1, \ldots , \boldsymbol{x}_M\}$ be a set of $M$ random points in the {``randomly perturbed''} model (see Example \ref{ex:noisy}) with {random perturbation} parameter $\epsilon>0$.
For any $\frac{1}{\sqrt{n}} < \delta < 1$, set $\{\boldsymbol{x}_1, \ldots , \boldsymbol{x}_M\}$ is Fisher-separable with probability at least
$$
1 - \frac{2M^2}{\delta\sqrt{n}}\left(\sqrt{1-\delta^2}\right)^{n+1}-M\left(\frac{2\delta}{\epsilon}\right)^n.
$$
In particular, set $\{\boldsymbol{x}_1, \ldots , \boldsymbol{x}_M\}$ is Fisher-separable with probability at least $1-v$, $v>0$, provided that $M<a b^n$, where $a,b$ are constants depending only on $v$ and $\epsilon$.
\end{theorem}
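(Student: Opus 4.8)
The plan is to work with the per-cluster centred discriminants described in the remark preceding the theorem: to separate $\boldsymbol{x}_i$ I use the functional $l_i(z)=(\boldsymbol{x}_i-\boldsymbol{y}_i,\,z-\boldsymbol{y}_i)$. Writing $\boldsymbol{u}_i=\boldsymbol{x}_i-\boldsymbol{y}_i$, each $\boldsymbol{u}_i$ is uniform in the ball $\epsilon\mathbb{B}_n$, the $\boldsymbol{u}_i$ are independent, and --- crucially for later --- the direction $\hat{\boldsymbol{u}}_i=\boldsymbol{u}_i/\|\boldsymbol{u}_i\|$ is uniform on $S^{n-1}$ and independent of the radius $\|\boldsymbol{u}_i\|$, with $\mathbb{P}(\|\boldsymbol{u}_i\|<t)=(t/\epsilon)^n$ for $t\le\epsilon$. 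Separation of $\boldsymbol{x}_i$ from $\boldsymbol{x}_j$ means $l_i(\boldsymbol{x}_i)>l_i(\boldsymbol{x}_j)$, i.e. $(\boldsymbol{u}_i,\boldsymbol{u}_i)>(\boldsymbol{u}_i,\boldsymbol{x}_j-\boldsymbol{y}_i)$, which is exactly the Fisher condition (\ref{eq:Fisher}) in the coordinates centred at $\boldsymbol{y}_i$. Hence the set fails to be Fisher-separable only if, for some ordered pair $i\neq j$, $(\boldsymbol{u}_i,\boldsymbol{x}_j-\boldsymbol{y}_i)\ge\|\boldsymbol{u}_i\|^2$, and I would bound the probability of this union after a geometric reduction.

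The reduction is the heart of the argument. Fix $i\neq j$ and condition on $\boldsymbol{x}_j$, which is independent of $\boldsymbol{u}_i$; set $\boldsymbol{v}_{ij}=\boldsymbol{x}_j-\boldsymbol{y}_i$. Since every $\boldsymbol{x}_j$ lies in the unit ball and every centre $\boldsymbol{y}_i$ lies in $(1-\epsilon)\mathbb{B}_n$, one has the deterministic bound $\|\boldsymbol{v}_{ij}\|\le\|\boldsymbol{x}_j\|+\|\boldsymbol{y}_i\|<2$. Dividing the failure inequality $(\boldsymbol{u}_i,\boldsymbol{v}_{ij})\ge\|\boldsymbol{u}_i\|^2$ by $\|\boldsymbol{u}_i\|\,\|\boldsymbol{v}_{ij}\|>0$ rewrites it as $(\hat{\boldsymbol{u}}_i,\hat{\boldsymbol{v}}_{ij})\ge\|\boldsymbol{u}_i\|/\|\boldsymbol{v}_{ij}\|$, where $\hat{\boldsymbol{v}}_{ij}=\boldsymbol{v}_{ij}/\|\boldsymbol{v}_{ij}\|$. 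I would then split on the radius. If $\|\boldsymbol{u}_i\|<2\delta$ --- an event of probability $(2\delta/\epsilon)^n$ --- I simply discard the pair. If instead $\|\boldsymbol{u}_i\|\ge2\delta$, then because $\|\boldsymbol{v}_{ij}\|<2$ the threshold satisfies $\|\boldsymbol{u}_i\|/\|\boldsymbol{v}_{ij}\|>\delta$, so failure forces the random unit vector $\hat{\boldsymbol{u}}_i$ into the spherical cap $\{\,\hat u\in S^{n-1}:(\hat u,\hat{\boldsymbol{v}}_{ij})\ge\delta\,\}$. This is exactly the role of the factor $2$ in the cut-off $2\delta$, and it explains the two terms appearing in the stated bound.

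Since $\hat{\boldsymbol{u}}_i$ is uniform on $S^{n-1}$ and independent of $\|\boldsymbol{u}_i\|$, the conditional probability of the cap event equals the normalised cap measure, which I would bound by the standard estimate $\mathbb{P}((\hat{\boldsymbol{u}}_i,\hat{\boldsymbol{v}}_{ij})\ge\delta)\le\frac{2}{\delta\sqrt n}\big(\sqrt{1-\delta^2}\big)^{\,n+1}$, valid in the regime $\delta>1/\sqrt n$ (it follows from $\int_\delta^1(1-t^2)^{(n-3)/2}\,dt\le\delta^{-1}\int_\delta^1 t(1-t^2)^{(n-3)/2}\,dt$ together with the asymptotics of the normalising Beta integral). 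This bound is independent of the value of $\boldsymbol{v}_{ij}$, so it survives taking expectation over $\boldsymbol{x}_j$. Collecting the $M$ radius events (each depending only on $i$) and the $M(M-1)<M^2$ ordered cap events through a union bound yields $\mathbb{P}(\text{not separable})\le M(2\delta/\epsilon)^n+\frac{2M^2}{\delta\sqrt n}\big(\sqrt{1-\delta^2}\big)^{\,n+1}$, which is the claimed estimate. I expect the spherical-cap measure estimate --- pinning down the exponent and the $1/(\delta\sqrt n)$ prefactor, and justifying the $\delta>1/\sqrt n$ restriction --- to be the only genuinely technical point; the rest is bookkeeping with the union bound and the deterministic radius inequalities.

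For the concluding ``in particular'' statement I would fix $\delta$ as a constant fraction of $\epsilon$, e.g. $\delta=\epsilon/4$, so that $(2\delta/\epsilon)^n=2^{-n}$ and $\big(\sqrt{1-\delta^2}\big)^{n+1}$ both decay geometrically. Choosing any $b$ with $1<b<\min\{2,(1-\delta^2)^{-1/4}\}$ makes $M^2\big(\sqrt{1-\delta^2}\big)^{n}$ and $M\,2^{-n}$ tend to zero whenever $M\le ab^n$, and a suitable $a=a(v,\epsilon)$ forces the total error below $v$; the finitely many small $n$ with $\delta\le1/\sqrt n$ are absorbed into $a$. This gives Fisher-separability with probability at least $1-v$ for $M<ab^n$ with $a,b$ depending only on $v$ and $\epsilon$.
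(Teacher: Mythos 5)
Your proposal is correct and follows essentially the same route as the paper: quasiorthogonality of the perturbation direction $\hat{\boldsymbol{u}}_i$ to fixed vectors via the spherical-cap estimate, a cutoff $\|\boldsymbol{u}_i\|\ge 2\delta$ for the perturbation radius, and a union bound over indices. The only (cosmetic) difference is that you bound the single inner product with $\boldsymbol{x}_j-\boldsymbol{y}_i$ using $\|\boldsymbol{x}_j-\boldsymbol{y}_i\|<2$, whereas the paper bounds the projections of $\boldsymbol{x}_j$ and of $\boldsymbol{y}_i$ separately (each by $\delta$); both arrangements of the triangle inequality give the same threshold and the same final estimate.
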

\begin{proof}
Let $\boldsymbol{x} \in {\mathbb R}^n$ be an arbitrary non-zero vector, and let $\boldsymbol{u}$ be a vector selected uniformly at random from a unit ball. Then, for any $\frac{1}{\sqrt{n}}<\delta<1$,
\begin{equation}\label{eq:almostort}
P\left(\left|\left(\frac{\boldsymbol{x}}{||\boldsymbol{x}||}, \frac{\boldsymbol{u}}{||\boldsymbol{u}||}\right)\right| \geq \delta \right) \leq \frac{2}{\delta\sqrt{n}}\left(\sqrt{1-\delta^2}\right)^{n+1},
\end{equation}
see \cite[Lemma 4.1]{Lovasz}.

Applying \eqref{eq:almostort} to $\boldsymbol{u}=\boldsymbol{x}_i-\boldsymbol{y}_i$, we get
$$
P\left(\left|\left(\frac{\boldsymbol{x}_j}{||\boldsymbol{x}_j||}, \frac{\boldsymbol{u}}{||\boldsymbol{u}||}\right)\right| \geq \delta \right) \leq \frac{2}{\delta\sqrt{n}}\left(\sqrt{1-\delta^2}\right)^{n+1}, \;  j\neq i,
$$
and also
$$
P\left(\left|\left(\frac{\boldsymbol{y}_i}{||\boldsymbol{y}_i||}, \frac{\boldsymbol{u}}{||\boldsymbol{u}||}\right)\right| \geq \delta \right) \leq \frac{2}{\delta\sqrt{n}}\left(\sqrt{1-\delta^2}\right)^{n+1}.
$$
On the other hand
$$
P\left(||\boldsymbol{x}_i-\boldsymbol{y}_i|| \leq 2\delta\right) = \left(\frac{2\delta}{\epsilon}\right)^n.
$$

If none of the listed events happen, then projections of all points $\boldsymbol{x}_j$, $j \neq i$, on $\boldsymbol{u}$ have length at most $\delta$ (because $||\boldsymbol{x}_j|| \leq 1, \forall j$), while the length of projection of $\boldsymbol{x}_i$ on $\boldsymbol{u}$ is greater than $\delta$, hence $\boldsymbol{x}_i$ is separable from other points by Fisher discriminant (with center $\boldsymbol{y}_i$). Hence, the probability that $\boldsymbol{x}_i$ is not separable is at most
$$
\frac{2M}{\delta\sqrt{n}}\left(\sqrt{1-\delta^2}\right)^{n+1}+\left(\frac{2\delta}{\epsilon}\right)^n
$$

The probability that there exist some index $i$ such that $\boldsymbol{x}_i$ is not separable is at most the same expression multiplied by $M$.
\end{proof}

{Theorem \ref{th:noisy} is yet another illustration of why randomization and randomized approaches to learning may improve performance of AI systems (see e.g. \cite{Wang2017},\cite{Wang2017a} for more detailed discussion on the randomized approaches and supervisory mechanisms for random parameter assignment).}

{Moreover,} Theorem \ref{th:noisy} shows that  the cluster structure of data is not an insurmountable obstacle for separation theorems. The practical experience ensures us that combination of cluster analysis with stochastic separation theorems works much better than the stochastic separation theorems directly, if there exists a pronounced cluster structure in data. The preferable way of action is:
\begin{itemize}
\item Find clusters in data clouds;
\item Create classifiers for distribution of newly coming data between clusters;
\item Apply stochastic separation theorems with discriminant (\ref{discriminant}) for each cluster separately.
\end{itemize}

This is a particular case of the general rule about complementarity between low-dimensional non-linear structures and high-dimensional stochastic separation \cite{GorTyukPhil2018}.

\section{Conclusion}

The continuous development of numerous automated AI systems for data mining is inevitable. Well-known AI products capable of responding, at least partially, to elements of the Big Data Challenge have already been developed by technological giants such as Amazon, IBM, Google, Facebook, SoftBank and many others. State-of-the art AI systems for data mining consume huge and fast-growing collections of heterogeneous data. Multiple versions of these huge-size systems have been deployed to date on millions of computers and gadgets across many various platforms. Inherent uncertainties in data result in unavoidable mistakes (e.g. mislabelling, false alarms, misdetections, wrong predictions etc.) of the AI data mining systems, which require judicious use. The successful operation of any AI system dictates that mistakes must be detected and corrected immediately and locally in the networks. However, it is prohibitively expensive and even dangerous to reconfigure big AI systems in real time.

The future development of sustainable large AI systems for mining of big data requires creation of technology and methods for fast non-iterative, non-destructive, and reversible corrections of Big Data analytics systems and for fast assimilation of new skills by the network of AI. This process should exclude human expertise as far as it is possible. In this paper we presented a brief outline of an approach to Augmented AI. This approach uses communities of interacting AI systems, fast non-destructive and non-iterative correctors of mistakes, knowledge transfer between AIs, a recommender system for distribution of problems to experts and experts to problems, and various types of audit systems. Some parts and versions of this technology have been implemented and tested.
 
 Linear Fisher's discriminant is very convenient and efficiently separates data for many distributions in high dimensions. The cascades of independent linear discriminants are also very simple and more efficient \cite{GorTyuRom2016,GorbanRomBurtTyu2016}. We have systematically tested linear and cascade correctors with simulated data and on the processing of  real videostream data \cite{GorbanRomBurtTyu2016}. The combination of low-dimensional non-linear decision rules with the  high-dimensional simple linear discriminants is a promising direction of the future development of algorithms.

New stochastic separation theorems demonstrate that the corrector technology can be used to handle errors in data flows with very general probability distributions and far away from the classical i.i.d. hypothesis.

The technology of Augmented AI is necessary for prevention of the deep failure from  the current peak of inflated interest  to intellectual solutions  (Fig.~\ref{Fig:SlideGartnerInnovationCycle}) into the gorge of deceived expectations.

\section*{Acknowledgment}

The proposed corrector methodology was implemented and successfully tested with videostream data and security tasks in collaboration with industrial partners: Apical, ARM, and VMS under support of InnovateUK. We are grateful to them and personally to I. Romanenko, R. Burton, and  K. Sofeikov.   We are grateful to M.~Gromov, who attracted our attention to the seminal question about product distributions in a multidimensional cube, and to G.~Hinton for the important remark that the typical situation with the real data flow is far from an i.i.d. sample (the points we care about seem to be from different distributions).

\end{document}